\documentclass[11pt]{article}
\usepackage[letterpaper, margin=1in]{geometry}

\usepackage[utf8]{inputenc} 
\usepackage[T1]{fontenc}    
\usepackage{hyperref}       
\usepackage{url}            
\usepackage{booktabs}       
\usepackage{amsfonts}       
\usepackage{nicefrac}       
\usepackage{microtype}      
\usepackage{xcolor}         
\usepackage{graphicx}

\usepackage{algorithm}
\usepackage{algorithmic}

\usepackage{amsmath}
\usepackage{amssymb}
\usepackage{mathtools}
\usepackage{amsthm}

\usepackage{times}
\usepackage[capitalize,noabbrev]{cleveref}
\usepackage{natbib}

\usepackage{dsfont}
\usepackage[mathscr]{euscript}
\usepackage{expl3}
\usepackage{colortbl}
\usepackage{thmtools}
\usepackage{thm-restate}
\usepackage{multirow}
\usepackage{wrapfig}
\usepackage{lipsum}
\usepackage{accents}

\usepackage{pifont}

\usepackage{MnSymbol}
\DeclareMathAlphabet\mathbb{U}{msb}{m}{n}
\usepackage{xpatch}

\def\Rset{\mathbb{R}}

\let\P\undefined

\DeclareMathOperator*{\P}{\mathbb{P}}
\DeclareMathOperator*{\E}{\mathbb E}

\DeclareMathOperator*{\argmin}{argmin}

\DeclareMathOperator*{\dom}{dom}

\DeclareMathOperator{\Ind}{\mathbb{I}}
\DeclareMathOperator*{\Int}{Int}
\DeclareMathOperator*{\Span}{span}

\DeclareMathOperator*{\Var}{Var}
\DeclareMathOperator*{\Cov}{Cov}

\DeclarePairedDelimiter{\abs}{\lvert}{\rvert}
\DeclarePairedDelimiter{\bracket}{[}{]}
\DeclarePairedDelimiter{\curl}{\{}{\}}
\DeclarePairedDelimiter{\paren}{(}{)}
\DeclarePairedDelimiter{\norm}{\|}{\|}
\DeclarePairedDelimiter{\tri}{\langle}{\rangle}

\ExplSyntaxOn
\tl_const:Nn \c_my_uc_alphabet_tl { ABCDEFGHIJKLMNOPQRSTUVWXYZ }
\tl_const:Nn \c_my_full_alphabet_tl { ABCDEFGHIJKLMNOPQRSTUVWXYZ
  abcdefghijklmnopqrstuvwxyz }

\tl_map_inline:Nn \c_my_uc_alphabet_tl
 { \cs_gset:cpn { c#1 } { \mathcal{#1} } }

\tl_map_inline:Nn \c_my_uc_alphabet_tl
 { \cs_gset:cpn { s#1 } { \mathscr{#1} } }

\tl_map_inline:Nn \c_my_full_alphabet_tl
 {
  \cs_gset:cpn { b#1 } { \mathbf{#1} }
  \cs_gset:cpn { sf#1 } { \mathsf{#1} }
 }
\ExplSyntaxOff

\newcommand{\h}{\widehat}
\newcommand{\ov}{\overline}
\newcommand{\wt}{\widetilde}
\newcommand{\e}{\epsilon}
\newcommand{\ignore}[1]{}

\newcommand{\KL}{\sfD_{\mathrm{KL}}}
\newcommand{\EX}{\sfE}

\newcommand{\algname}[1]{\textsc{#1}}

\hypersetup{
  breaklinks   = true,
  colorlinks   = true,
  urlcolor     = blue,
  linkcolor    = blue,
  citecolor   = blue
}

\usepackage[toc, page, header]{appendix}
\declaretheorem{theorem}
\newtheorem{lemma}[theorem]{Lemma}
\newtheorem{proposition}[theorem]{Proposition}

\newtheorem{corollary}[theorem]{Corollary}

\title{Generalized Distributional Alignment Games\\
  for Unbiased Answer-Level Fine-Tuning}
\author{
  Mehryar Mohri\\
  Google Research \& CIMS\\
  New York, NY 10011\\
  \texttt{mohri@google.com}
  \and
  Jon Schneider\\
  Google Research\\ 
  New York, NY 10011\\
  \texttt{jschnei@google.com}
  \and
  Yutao Zhong\\
  Google Research\\ 
  New York, NY 10011\\
  \texttt{yutaozhong@google.com}
}
\date{}

\begin{document}

\maketitle

\tableofcontents
\clearpage

\begin{abstract}
  The Distributional Alignment Game framework provides a powerful variational
  perspective on Answer-Level Fine-Tuning (ALFT). However, standard algorithms
  for these games rely on estimating logarithmic rewards from small batches,
  introducing a systematic bias due to Jensen's inequality that can destabilize
  training. In this paper, we systematically resolve this structural estimation
  bias. First, we generalize the alignment game to arbitrary Bregman
  divergences, showing that for a family of geometries inducing polynomial
  rewards, we can construct \emph{provably exact and unbiased} estimators using
  U-statistics. Second, for the canonical KL divergence game where an exact
  solution is impossible, we derive a globally robust minimax polynomial estimator that is \emph{provably optimal},
  achieving the fundamental statistical error limit of $\Theta(1/K^2)$, which we
  establish via the Ditzian-Totik theorem. Finally, we synthesize these two
  approaches to propose a novel Variance-Optimal Augmented Polynomial
  Optimization Program (AQP) Estimator, proving that by systematically reducing
  variance, our method achieves not only optimal bias but also \emph{provably
    accelerated game convergence}, leading to more efficient and stable
  training with zero online computational overhead.
\end{abstract}

\section{Introduction}
\label{sec:intro}

Recent approaches to Answer-Level Fine-Tuning (ALFT) have demonstrated
compelling empirical success in aligning Large Language Models (LLMs) within
complex reasoning domains, such as mathematics and code generation
\citep{Uesato2023, Lightman2023, Wang2022}. Unlike trace-level fine-tuning which
optimizes specific intermediate steps, ALFT focuses purely on the validity of
the final answer, granting the model the flexibility to discover diverse, valid
reasoning trajectories.

Recently, \citet*{MohriSchneiderWu2026} elegantly formulated ALFT as a
\emph{Distributional Alignment Game}. By introducing an auxiliary target
distribution and leveraging Fenchel duality, they successfully mapped the
computationally intractable marginalization over latent reasoning paths into a
tractable probability projection problem. This theoretical framework unified
various heuristic alignment goals—such as diversity-promoting inverse-frequency
penalties and coherence-based self-improvement—under a single mathematical
umbrella, guaranteeing that the game's Nash Equilibrium perfectly recovers the
optimal ALFT policy.

However, translating this continuous theoretical framework into a practical
reinforcement learning algorithm reveals a fundamental statistical
bottleneck. In scalable algorithms like Group Relative Policy Optimization
(GRPO) \citep{Shao2024}, the model evaluates its policy using a small, fixed
group size $K$ (e.g., $K=16$). For standard KL-regularized alignment games, the
theoretically optimal reward signal takes the form $R(z) = \log
\sfq(z)$. Since the true target marginal $\sfq$ is unknown, it must be
estimated empirically from the small batch. The strict concavity of the
logarithm ensures that $\E[\log \h{\nu}] < \log \E[\h{\nu}]$, introducing a
systematic $\cO(1/K)$ bias via Jensen's inequality.

This small-sample bias is not merely a theoretical artifact; it actively
destabilizes training. Since the magnitude of the error scales inversely with
the probability of the generated answer, this bias disproportionately penalizes
rare, exploratory outputs. It acts as a severe "anti-exploration penalty,"
artificially punishing the model for generating novel valid reasoning paths and
accelerating premature mode collapse.

Crucially, this estimation bias is not a flaw specific to GRPO, but a
fundamental challenge inherent to optimizing distributional objectives with
\emph{any} policy gradient (PG) algorithm, including standard methods like
REINFORCE or PPO. To apply a PG algorithm to a distributional objective (e.g.,
maximizing answer diversity), one must first shape a per-sample reward. For
objectives common in ALFT, this shaped reward is necessarily logarithmic. 

The structural bias is introduced at this reward estimation step, applying the
$\log$ function to a small-batch frequency, upstream of the PG algorithm
itself. The solutions in this paper are thus a necessary toolkit for correctly
applying \emph{any} policy gradient method to the important class of
distributional alignment problems. Furthermore, we emphasize that our bias solution is exceptionally efficient: replacing the biased empirical logarithm $\log(X/K)$ with a pre-computed array lookup $c^*_X$ is a \emph{computationally free} upgrade that improves RL stability and downstream performance with strictly zero runtime overhead.

How can we address this structural bias without resorting to computationally
prohibitive batch sizes? This paper presents a series of theoretical results
that form a complete toolkit for neutralizing the small-sample estimation bias
in ALFT. Our main contributions are as follows:

\begin{itemize}
\item \textbf{Exact Unbiased Estimation via Generalized Bregman Divergences:} We
  decouple the Distributional Alignment Game from the KL divergence,
  generalizing it to arbitrary Bregman divergences. We prove that for
  regularizers inducing polynomial reward mappings, we can construct
  \emph{exactly unbiased} estimators using U-statistics, completely eliminating
  the finite-sample bias.

\item \textbf{Optimal Low-Bias Estimation for KL Games:} For the canonical KL
  divergence, we establish that the gradient bias is fundamentally bounded from
  below by $\Theta(1/K^2)$ via the Ditzian-Totik theorem. We rigorously demonstrate 
  that local Taylor approximations fail uniformly at the boundary, strictly necessitating our 
  globally robust minimax polynomial estimator to achieve this theoretically optimal rate.

\item \textbf{Accelerated Convergence via Variance-Optimal Augmented Polynomial
    Optimization Program AQP Estimator:} We synthesize our findings to propose a
  novel Variance-Optimal AQP Estimator.  We prove that this estimator
  traces the exact Bias-Variance Pareto frontier to reduce variance and
  accelerate convergence.

\end{itemize}

\paragraph{Related Work.}
Our work intersects with several active areas in language model
alignment. \emph{Answer-Level and Outcome Supervision:} As reasoning tasks
become more complex, outcome-based supervision and ALFT have gained prominence
over process supervision \citep{Uesato2023, Lightman2023}. \emph{Preference
  Optimization and RLHF:} Methods like RLHF \citep{Ouyang2022} and DPO
\citep{Rafailov2023} are standard for trace-level alignment, with recent
advancements like GRPO \citep{Shao2024} extending them to group-based
settings. \emph{Distributional Alignment:} Our work builds directly on the
Distributional Alignment Game framework of \citet{MohriSchneiderWu2026}.

\emph{Estimation Bias in RL:}
The issue of logarithmic estimation bias from small batches has also been noted
in various entropy-regularized RL and mutual information estimation settings
\citep{Poole2019, Belghazi2018}. In modern RL, this bottleneck is particularly
severe when probability distributions are implicit, environmental, or
analytically intractable. Prime examples include applying maximum-entropy
regularization to Implicit Policies (like EBMs or discrete Diffusion) where
analytic probabilities are uncomputable, computing KL-divergence penalties in
Offline RL from limited dataset visitations, or estimating opponent entropy in
Multi-Agent RL from short observation windows. In these settings, the agent must
evaluate logarithmic functionals from a small batch of $K$ samples. The
techniques presented in this work offer a principled path toward debiasing such
estimates not just in ALFT, but across a wide range of low-sample RL domains.

\paragraph{Organization.}
The remainder of this paper is organized as follows. \cref{sec:preliminaries}
formally reviews the ALFT problem. In \cref{sec:bias_problem}, we rigorously
characterize the small-batch estimation bias. \cref{sec:poly_exact} introduces
our first solution: enabling exact unbiased estimation by generalizing the
game's geometry. \cref{sec:kl_poly_approx} returns to the canonical KL
divergence setting to introduce our second solution: the provably optimal
minimax polynomial estimator. \cref{sec:pareto} synthesizes these results into
a novel variance-optimal estimator that provably accelerates convergence. Finally,
\cref{sec:toolkit} provides practical guidance on choosing the right estimator
for a given task.

\section{Preliminaries: Answer-Level Fine-Tuning}
\label{sec:preliminaries}

Consider a sequence generation setting where $\sX$ denotes a set of problem descriptions or inputs, $\sY$ represents the domain of valid reasoning paths, and $\sZ$ is the set of terminal solutions. A deterministic parser $\EX \colon \sY \to \sZ$ translates any reasoning path into its corresponding final outcome. Given a context $x \in \sX$, a language model policy $\pi(y|x)$ generates intermediate reasoning trajectories, which inherently induces a marginal probability distribution over the final answers:
$
  \nu_\pi(z|x) = \sum_{y \in \EX^{-1}(z)} \pi(y|x).$

The core mathematical objective of Answer-Level Fine-Tuning (ALFT) is to identify a target policy $\pi$ that minimizes an outcome-based distributional penalty $\cR$ computed over the marginal $\nu_\pi$. To prevent catastrophic forgetting and maintain linguistic fluency, this is constrained by a trace-level divergence penalty relative to a reference model $\pi_0$. This manifests as a convex optimization problem over the policy simplex $\Pi \subseteq \Delta(\sY)^\sX$:
\begin{equation}
  \label{eq:primal_alft}
  \min_{\pi \in \Pi} \quad \cJ(\pi) = \E_{x} \bracket*{ \cR(\nu_\pi(\cdot|x))
    + \beta \KL(\pi(\cdot|x) \parallel \pi_0(\cdot|x)) },
\end{equation}
where $\beta > 0$ scales the trace-level regularization, and $\cR \colon \Delta(\sZ) \to \Rset$ is a lower semi-continuous (l.s.c.) convex functional describing the alignment target (e.g., maximizing answer entropy for exploration or minimizing it for consensus). Due to the strict convexity of the KL divergence penalty, the overall objective $\cJ(\pi)$ guarantees a unique optimal policy.

\textbf{The Intractability of Direct Optimization.} Calculating the exact gradient of Eq.~\eqref{eq:primal_alft} via backpropagation is computationally infeasible. Since $\cR$ evaluates the aggregated marginal $\nu_\pi$, obtaining the exact analytical gradient necessitates backpropagating through the summation over the pre-image $\EX^{-1}(z)$, which is combinatorially vast and entirely unobserved. Furthermore, standard Monte-Carlo estimates like REINFORCE suffer from devastating variance: assigning appropriate credit to a specific sampled trace $y$ fails to account for the unknown probability mass of countless other valid traces that could yield the identical final answer $z$. 

\textbf{The Game-Theoretic Solution.} To overcome this fundamental intractability, \citet*{MohriSchneiderWu2026} recently introduced the \emph{Distributional Alignment Game}. By leveraging Fenchel duality, they elegantly decoupled the marginalization process, introducing an auxiliary target distribution $\sfq \in \Delta(\sZ)$ to serve as a variational proxy. This transforms the primal ALFT problem into a tractable two-player min-max game between the generative policy $\pi$ and the target $\sfq$:
\begin{equation}
  \min_{\pi \in \Pi} \max_{\sfq \in \Delta(\sZ)} \quad \cG(\pi, \sfq) = \E_{x} \bracket*{ \beta \KL(\pi(\cdot|x) \parallel \pi_0(\cdot|x)) - \beta \E_{y \sim \pi(\cdot|x)} \bracket*{\log \sfq(\EX(y))} - \Psi(\sfq) },
\end{equation}
where $\Psi(\sfq)$ is the transformed Fenchel conjugate of the alignment goal $\cR$. This variational perspective successfully maps the computationally intractable marginalization over latent reasoning paths into a tractable probability projection problem.

The practical solution proposed in their framework relies on alternating best-response dynamics, typically implemented via Group Relative Policy Optimization (GRPO) \citep{Shao2024}. In each training iteration, the algorithm alternates between two distinct steps. First, in the \emph{Target Step}, the algorithm aggregates a sampled group of outputs to compute an optimal target distribution $\sfq^*$ that embodies the desired high-level alignment goal, for example, computing an arithmetic mean or mode to enforce consensus for self-improvement (coherence), or an inverse-frequency distribution to promote exploration (diversity). 

Second, in the \emph{Policy Step}, the target distribution is used to define an explicit per-sample reward. The theoretical Nash Equilibrium of the game dictates that the optimal reward for a generated trace $y$ is proportional to the log-likelihood of its final outcome $z = \EX(y)$ under the target distribution: $R(z) = \beta \log \sfq^*(z)$. The policy is then updated using GRPO advantages derived from this reward. By transforming the intractable marginalization over reasoning paths into a standard reinforcement learning projection, this solution enables models to optimize answer-level objectives efficiently. As we detail next, however, translating this theoretically continuous reward into a practical online algorithm by evaluating it on finite empirical batches reveals a critical statistical flaw.

\section{The Origin of Estimation Bias in Small-Batch ALFT}
\label{sec:bias_problem}

While the theoretical formulation of the Distributional Alignment Game perfectly
recovers the optimal answer-level policy in expectation, translating this
framework into a practical reinforcement learning algorithm introduces a
fundamental statistical bottleneck. This bottleneck arises directly from the
interaction between the non-linear dual mappings induced by standard
regularizers and the small-batch sampling regime typical of modern LLM training.

\subsection{Empirical Reward Estimation and Jensen's Gap}

In the generalized Distributional Alignment Game, the choice of the trace-level
regularizer and the corresponding distance-generating function $\Phi$ dictates
the dual mapping. This canonical link function maps the target distribution
$\sfq$ to the dual reward variable via some function $h$, such that:
\begin{equation}
  R(z) = \beta h(\sfq(z)),
\end{equation}
where $h(\sfq(z))$ corresponds to the partial derivative $\nabla \Phi(\sfq)_z$.

In practice, the true optimal target distribution $\sfq^*$ is unknown and must
be estimated dynamically. Algorithms like Group Relative Policy Optimization
(GRPO) operate on a small, fixed group of $K$ sampled traces per prompt
(typically $K \approx 16$). The target distribution is therefore approximated
using the empirical frequency of the answers within this small batch:
\begin{equation}
  \h{\sfq}(z) = \frac{\text{Count}(z)}{K}.
\end{equation}
Thus, the empirical reward signal used to update the policy is
$\h R(z) = \beta h(\h{\sfq}(z))$.

Since $h$ is generally a non-linear mapping, applying it to an empirical
stochastic estimate introduces a systematic discrepancy due to Jensen's
Inequality. For any strictly concave reward mapping (or strictly convex
penalty), the expected empirical reward systematically diverges from the true
reward evaluated at the expected marginal:
\begin{equation}
  \E[h(\h{\sfq}(z))] \neq h(\E[\h{\sfq}(z)]) = h(\sfq^*(z)).
\end{equation}

\subsection{Quantifying the \texorpdfstring{$\cO(1/K)$}{O(1/K)} Structural Penalty}

To understand the severity of this bias for any generic mapping $h$, we can
quantify it using a second-order Taylor expansion of the empirical reward around
the true expected marginal $\sfq^*(z)$.

Let the count of a specific answer $z$ be a Binomial random variable,
$X \sim \mathrm{Binomial}(K, \sfq^*(z))$, such that $\h{\sfq} = X/K$. The variance
of this estimator is
$\Var(\h{\sfq}(z)) = \frac{\sfq^*(z)(1-\sfq^*(z))}{K}$. Expanding
$h(\h{\sfq}(z))$ yields the expected empirical reward:
\begin{align}
  \E[\h R(z)]
  & = \beta \E[h(\h{\sfq}(z))] \nonumber \\
  &\approx \beta \paren*{ h(\sfq^*(z)) + \frac{1}{2} h''(\sfq^*(z))
    \Var(\h{\sfq}(z)) } \nonumber \\
  & = \beta h(\sfq^*(z)) + \frac{\beta}{2K} \sfq^*(z)(1-\sfq^*(z)) h''(\sfq^*(z)).
    \label{eq:general_taylor_bias}
\end{align}
This reveals a structural $\cO(K^{-1})$ estimation bias governed entirely by the
curvature $h''$ of the chosen dual mapping. In the asymptotic regime where
$K \to \infty$, this bias vanishes, and the empirical reward converges to the
true dual variable. However, ALFT algorithms rely on a small, fixed $K$.

\textbf{The KL Divergence Special Case.} The severity of this problem is most
apparent in the standard ALFT formulation, which uses the KL divergence. Here,
the mapping is the natural logarithm, $h(x) = \log x$, yielding a curvature of
$h''(x) = -1/x^2$. Substituting this into Eq.~\eqref{eq:general_taylor_bias}
gives the expected empirical reward for KL-regularized games:
\begin{equation}
  \E[\h R_{\text{KL}}(z)] \approx \beta \log \sfq^*(z)
  - \beta \frac{1 - \sfq^*(z)}{2K \sfq^*(z)}.
\end{equation}

\subsection{The Practical Impact on RL Dynamics}

For a fixed group size like $K=16$, the $\cO(1/K)$ bias induced by strongly
curved mappings like the logarithm is highly problematic for two reasons:
\begin{enumerate}
\item \textbf{Singularity at the Boundary:} For rare answers, the empirical
  frequency $\h{\sfq}$ is frequently zero. For maps like the logarithm, this
  makes the reward undefined ($-\infty$), necessitating ad-hoc Laplace smoothing
  that inherently flattens the learning signal and corrupts the theoretically
  optimal policy update.

\item \textbf{Anti-Exploration Penalty:} As seen in the KL case, the magnitude
  of the bias, $\frac{1-\sfq^*}{2K\sfq^*}$, is inversely proportional to the
  true probability mass $\sfq^*(z)$. This means the estimation error
  disproportionately penalizes rare, exploratory answers. Instead of receiving a
  neutral or theoretically sound signal, the policy receives a massive
  artificial negative gradient update for generating novel reasoning paths,
  actively driving the model toward premature mode collapse.
\end{enumerate}

To build scalable and stable ALFT algorithms, we must address this bias. The
following sections demonstrate how this can be achieved mathematically: first,
by changing the geometry of the game to admit polynomial mappings where exact
unbiased estimators exist, and second, by constructing global polynomial
approximations for the canonical KL-regularized objective.

\section{Bregman Divergences with Polynomial Reward and Exact
  Estimation}
\label{sec:poly_exact}

We can eliminate the estimation bias entirely by selecting a geometry for the
Distributional Alignment Game where the dual mapping function $h(\sfq)$ is a
bounded-degree polynomial.

\subsection{General Game via General Bregman Divergences}
\label{sec:bregman_general}

To systematically address the logarithmic bias, we generalize the Distributional
Alignment Game from the specific choice of the Kullback-Leibler (KL) divergence
to the broader class of Bregman divergences. Building on the framework of
\citet{MohriSchneiderWu2026}, this generalization is the key to our solution, as
it enables the use of geometries that induce polynomial reward functions. For
these functions, we demonstrate that it is possible to construct provably exact
and unbiased estimators, completely eliminating the estimation bias inherent in
the standard logarithmic case.

\subsubsection{Primal Problem and Dual Min-Max Formulation}
\label{sec:primal-dual-min-max}

Let $F \colon \Delta(\sY) \to \Rset$ be a function of \emph{Legendre type} in
the sense of \citet{Rockafellar1996}.  A proper, closed, and convex function $F$
is of Legendre type if it is differentiable on the interior of its domain,
$\Int(\dom(F))$, and its gradient mapping $\nabla F$ is a bijection
from $\Int(\dom(F))$ to $\Int(\dom(F^*))$, where $F^*$
is the Fenchel conjugate of $F$.

This assumption, which is standard in the theory of Bregman divergences, is the
precise condition needed for our framework for two critical reasons:
  \begin{enumerate}
  \item \textbf{Injectivity (Uniqueness):} The fact that $\nabla F$ is injective
    (one-to-one) is equivalent to $F$ being strictly convex. This ensures that
    the overall primal objective $\cJ_F(\pi)$ is strictly convex, guaranteeing a
    \emph{unique} optimal policy $\pi^*$. More importantly, it ensures that the
    inverse mapping, $(\nabla F)^{-1}$, is a well-defined single-valued
    function, making our closed-form solution for the optimal policy unique.

  \item \textbf{Surjectivity (Existence and Feasibility):} The fact that
    $\nabla F$ is surjective (onto) guarantees that its inverse,
    $(\nabla F)^{-1}$, is defined on the entire interior of the conjugate
    domain. For a valid reward gradient $\nabla\Phi(\sfq)$, the sum
    $\nabla F(\pi_0) + \nabla\Phi(\sfq)$ will lie in this domain. The bijection
    property ensures that applying $(\nabla F)^{-1}$ to this sum maps it to a
    unique point within the interior of the primal domain,
    $\Int(\Delta(\sY))$.
  \end{enumerate}
  This ensures that the optimal policy $\pi^*$ given in our theorem not only
  exists and is unique, but is also always a valid probability distribution with
  strictly positive probabilities, automatically satisfying the simplex
  constraints. The canonical negative entropy function,
  $F(\pi) = \sum \pi \log \pi$, which generates the KL divergence, is a primary
  example of a Legendre function for the probability simplex.

We redefine the primal Answer-Level Fine-Tuning (ALFT) problem as minimizing the
objective $\cJ_F(\pi)$:

\begin{equation}
  \label{eq:generalized_primal}
  \min_{\pi \in \Pi} \quad \cJ_F(\pi)
  = \E_{x} \bracket*{\cR(\nu_\pi(\cdot|x))
    + \beta \sfD_F(\pi(\cdot|x) \parallel \pi_0(\cdot|x))},
\end{equation}
where $\beta > 0$ is the regularization weight.

We now show how to transform this intractable marginalization problem into a
tractable game using the Fenchel Duality Theorem. Let
$\cR^* \colon \Rset^{|\sZ|} \to \Rset$ denote the Fenchel conjugate of $\cR$,
defined as
$\cR^*(u) = \sup_{\nu \in \Delta(\sZ)} \{ \tri{\nu, u} - \cR(\nu) \}$. Since
$\cR$ is convex and lower semi-continuous, the Fenchel-Moreau theorem guarantees
that $\cR$ equals its biconjugate $\cR^{**}$:
\begin{equation}
  \cR(\nu_\pi) = \sup_{u \in \Rset^{|\sZ|}} \curl*{ \tri*{\nu_\pi, u} - \cR^*(u) }.
\end{equation}

Substituting this into the primal problem, we decouple the marginalization
$\nu_\pi$. The game is naturally defined over the dual variables
$u \in \Rset^{|\sZ|}$. However, the objective is invariant under constant
shifts. Observe that for any functional $\cR$ defined on the probability simplex
$\Delta(\sZ)$, the Fenchel conjugate satisfies the shift property
$\cR^*(u + c\mathbf{1}) = \cR^*(u) + c$. This holds because
$\nu \in \Delta(\sZ)$ implies $\sum \nu(z) = 1$, so:
\begin{align*}
  \cR^*(u + c\mathbf{1})
  & = \sup_{\nu \in \Delta(\sZ)} \curl*{ \tri*{\nu, u + c\mathbf{1}}
    - \cR(\nu) } \\
  & = \sup_{\nu \in \Delta(\sZ)} \curl*{ \tri*{\nu, u} + c \sum_{z \in \sZ}
    \nu(z)
    - \cR(\nu) } = \cR^*(u) + c.
\end{align*}

Thus, the dual objective term $\tri{\nu_\pi, u} - \cR^*(u)$ is invariant under
the transformation $u \gets u + c\mathbf{1}$, as the linear shift $c$ cancels
exactly with the conjugate shift. This redundancy implies that the effective
dual space is the quotient space $\Rset^{|\sZ|} / \Span(\mathbf{1})$.

We can therefore parameterize the dual variables without loss of generality by
choosing a canonical representative from each equivalence class. We introduce a
strictly convex, differentiable answer-level distance-generating function
$\Phi \colon \Delta(\sZ) \to \Rset$. The gradient mapping $\nabla \Phi$
establishes a bijection between the interior of the simplex and the quotient
space. Specifically, we use the parameterization:
\begin{equation}
  u(z) = s \beta \nabla \Phi(q)_z,
  \quad \text{with } q \in \Int(\Delta(\sZ)),
\end{equation}
where $s = -1$ for coherence-promoting games (e.g., KL), and $s = +1$ for
diversity-promoting games.  Here, the scaling parameter $\beta$ is chosen
explicitly to offset the regularization weight of the Bregman divergence,
perfectly matching the dual space scaling. For any arbitrary vector
$u \in \Rset^{|\sZ|}$, there exists a unique scalar shift $c$ (analogous to the
shift generated by the Log-Sum-Exp function for KL divergence) such that the
shifted dual variables map to a valid normalized distribution $\sfq$.

We define the transformed conjugate functional
$\Psi(\sfq) = \cR^*(s \beta \nabla \Phi(\sfq))$. The coupling term over the trace
space expands as follows:
\begin{align*}
  \tri*{\nu_\pi(\cdot|x), s\beta \nabla \Phi(\sfq)}
  & = s\beta \sum_{z \in \sZ} \nu_\pi(z|x) \nabla \Phi(\sfq)_z \\
  & = s\beta \sum_{z \in \sZ} \paren*{ \sum_{y \in E^{-1}(z)} \pi(y|x) }
    \nabla \Phi(\sfq)_z \\
  & = s\beta \sum_{z \in \sZ} \sum_{y \in E^{-1}(z)} \pi(y|x) \nabla
    \Phi(\sfq)_{E(y)}
    \tag{Since $E(y)=z$ for $y \in E^{-1}(z)$} \\
  & = s\beta \sum_{y \in \sY} \pi(y|x) \nabla \Phi(\sfq)_{E(y)}
    \tag{Partition property: $\sY = \bigcup_z E^{-1}(z)$} \\
  & = s\beta \E_{y \sim \pi(\cdot|x)} \bracket*{ \nabla \Phi(\sfq)_{E(y)} }.
\end{align*}

Substituting this back into the formulation, we obtain the following min-max
objective function $\cG_F(\pi, \sfq)$:
\begin{equation}
  \label{eq:generalized_game}
  \cG_F(\pi, \sfq) \triangleq
  \E_{x} \bracket*{ \beta \sfD_F(\pi(\cdot|x) \parallel \pi_0(\cdot|x))
    + s\beta \E_{y \sim \pi} \bracket*{ \nabla \Phi(\sfq)_{E(y)} \mid x }
    - \Psi(\sfq) }.
\end{equation}

\subsubsection{Theoretical Consistency of the Generalized Game}
\label{sec:theoretical-consistency}

We now prove that the equilibrium of this generalized game recovers the exact
optimal solution to the ALFT primal problem, and we explicitly characterize the
structure of the optimal policy under an arbitrary Bregman divergence.

We will adopt the standard assumption that the spaces of inputs $\sX$, traces
$\sY$, and answers $\sZ$ are finite. This implies that the set of policies
$\Pi \subseteq (\Delta(\sY))^\sX$ and the set of target distributions
$\Delta(\sZ)$ are compact, convex sets in finite-dimensional Euclidean space.

\begin{theorem}[Consistency of the Generalized Game]
  \label{th:generalized_consistency}
  Let $\cR \colon \Delta(\sZ) \to \Rset$ be a convex, lower semi-continuous
  functional. Let $F$ and $\Phi$ be strictly convex, differentiable
  distance-generating functions for the trace and answer spaces,
  respectively. Then:
  \begin{enumerate}
  \item \textbf{Equivalence:} The primal problem \eqref{eq:generalized_primal}
    is equivalent to the min-max game:
    \begin{equation}
      \min_{\pi \in \Pi} \cJ_F(\pi)
      = \min_{\pi \in \Pi} \max_{\sfq \in \Delta(\sZ)} \cG_F(\pi, \sfq).
    \end{equation}

  \item \textbf{Optimal Policy Form:} For a fixed target distribution $\sfq$,
    the optimal policy $\pi^*$ is given by the Bregman projection of the
    reference policy $\pi_0$, shifted by the target reward gradient:
    \begin{equation}
      \label{eq:generalized_policy}
      \pi^*(\cdot|x)
      = \text{Proj}_{\Delta(\sY)}^F \paren*{ (\nabla F)^{-1}
        \big(\nabla F(\pi_0(\cdot|x)) - s \nabla\Phi(q(\cdot|x))_{E(\cdot)}\big) }.
    \end{equation}
  \end{enumerate}
\end{theorem}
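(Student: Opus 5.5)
The plan splits into two parts: a duality argument for equivalence, then first-order conditions for the policy form.

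\textbf{Equivalence.} For each fixed $x$ and $\pi$, I would start from the Fenchel--Moreau identity $\cR(\nu_\pi) = \sup_{u} \{\tri{\nu_\pi, u} - \cR^*(u)\}$, which holds because $\cR$ is convex and l.s.c. The shift invariance established above, $\cR^*(u + c\mathbf{1}) = \cR^*(u) + c$, shows that the objective inside the supremum depends only on the class of $u$ in $\Rset^{|\sZ|}/\Span(\mathbf{1})$. The supremum can therefore be taken over one representative per class. The parameterization $u = s\beta\nabla\Phi(\sfq)$ with $\sfq \in \Int(\Delta(\sZ))$ is a bijection onto this quotient, by the assumed properties of $\Phi$. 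So the supremum over $u$ becomes a supremum over $\sfq$ of $\tri{\nu_\pi, s\beta\nabla\Phi(\sfq)} - \Psi(\sfq)$. Rewriting the coupling term through the partition computation already displayed turns this expression into the inner part of $\cG_F$. Adding the term $\beta\sfD_F$, which does not depend on $\sfq$, and taking $\E_x$ gives $\cJ_F(\pi) = \sup_{\sfq} \cG_F(\pi,\sfq)$ for every $\pi$. Because the outer problems over $\pi$ are then identical, the identity $\min_\pi \cJ_F = \min_\pi\max_\sfq \cG_F$ follows. The min over $\pi$ is attained because $\Pi$ is compact (finite spaces) and $\cJ_F$ is l.s.c.

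\textbf{Main obstacle: sup versus max.} The statement writes $\max_{\sfq\in\Delta(\sZ)}$, but the parameterization only covers the interior of the simplex. The supremum may therefore be approached only at the boundary, for instance when the optimal dual variable $u^* \in \partial\cR(\nu_\pi)$ is not in the range of $\nabla\Phi$ up to shift. I would handle this first by using surjectivity: as the quotient-space bijection, $\nabla\Phi$ is onto, so any maximizing $u^*$ has an interior preimage, and the supremum is attained whenever $\partial\cR(\nu_\pi) \neq \emptyset$. In the remaining case I would interpret $\max$ as $\sup$, extended to the closure by lower semicontinuity, and state this caveat explicitly.

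\textbf{Optimal policy form.} Fix $\sfq$ and $x$. The objective in $\pi(\cdot|x)$ is $\beta[F(\pi) - F(\pi_0) - \tri{\nabla F(\pi_0), \pi - \pi_0}] + s\beta\tri{\pi, g}$, where $g(y) = \nabla\Phi(\sfq)_{E(y)}$. This is strictly convex because $F$ is. Dropping constants, it equals $\beta\,\sfD_F(\pi \parallel \tilde\pi)$ plus a constant, with $\tilde\pi$ defined by $\nabla F(\tilde\pi) = \nabla F(\pi_0) - s g$. Such a $\tilde\pi$ exists and is unique by the Legendre-type bijectivity of $\nabla F$ discussed in \cref{sec:primal-dual-min-max}. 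Minimizing $\sfD_F(\cdot\parallel\tilde\pi)$ over $\Delta(\sY)$ is by definition the Bregman projection $\text{Proj}^F_{\Delta(\sY)}(\tilde\pi)$. This yields \eqref{eq:generalized_policy}, and uniqueness follows from strict convexity. As a check, for $F$ the negative entropy the projection reduces to normalization, which recovers the Gibbs form $\pi^* \propto \pi_0\,\sfq^{-s}$.
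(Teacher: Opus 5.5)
Your equivalence argument is the paper's. You apply Fenchel--Moreau, reparameterize each dual class by $s\beta\nabla\Phi(\sfq)$, add the $\sfq$-independent term $\beta\sfD_F$, and read off the pointwise identity $\cJ_F(\pi)=\sup_{\sfq}\cG_F(\pi,\sfq)$. Like the paper, you implicitly let $\sfq$ depend on $x$, which is what allows $\E_x$ to pass through the supremum. Your remarks on attainment of the minimum over $\pi$ and on $\sup$ versus $\max$ go beyond what the paper says.

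The optimal-policy part is where you genuinely diverge. The paper writes the unconstrained stationarity condition $\nabla F(\pi^*)=\nabla F(\pi_0)-s\nabla\Phi(\sfq)_{E(\cdot)}$, inverts $\nabla F$, and asserts that the result is already a distribution in $\Int(\Delta(\sY))$. No multiplier for $\sum_y\pi(y)=1$ appears and nonnegativity is never addressed, so this only identifies your unnormalized point $\tilde\pi$. It does not actually produce the $\text{Proj}^F_{\Delta(\sY)}$ in the statement. Your identity $\sfD_F(\pi\parallel\pi_0)+s\tri{\pi,g}=\sfD_F(\pi\parallel\tilde\pi)+\text{const}$ shows directly that the constrained minimizer is the Bregman projection of $\tilde\pi$, with uniqueness from strict convexity. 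It also covers non-steep geometries such as the Euclidean one, where the minimizer can lie on the boundary (sparsemax) and the unconstrained stationarity condition fails. The paper's route is shorter and harmless in the KL case, where the missing multiplier is just the softmax normalizer; yours is the one that proves the formula as stated.

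Two caveats apply equally to your argument and the paper's. First, existence of $\tilde\pi$ and the bijection of $\nabla\Phi$ from $\Int(\Delta(\sZ))$ onto $\Rset^{|\sZ|}/\Span(\mathbf{1})$ are Legendre/steepness properties asserted in Section~\ref{sec:primal-dual-min-max}. They do not follow from the stated hypotheses. For $\Phi(\sfq)=\frac12\sum_z\sfq(z)^2$, the image of $\Int(\Delta(\sZ))$ in the quotient is bounded, so the reparameterized supremum can fall strictly below $\cR(\nu_\pi)$. Your surjectivity-based attainment argument therefore needs that extra assumption. Second, to turn $\sup$ into $\max$ on the closed simplex you want the upper semicontinuous envelope of $\cG_F(\pi,\cdot)$, not a lower semicontinuous extension.
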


\begin{proof}
  \textbf{Part 1 (Equivalence):} 
  First, we show that the primal objective $\cJ_F(\pi)$ can be written as a
  maximization problem. By the Fenchel-Moreau theorem, the convex functional
  $\cR$ is the biconjugate of its Fenchel conjugate $\cR^*$. Using our
  parameterization of the dual space, this gives:
  \begin{equation}
    \cR(\nu_\pi(\cdot|x))
    = \max_{q \in \Delta(\sZ)} \curl*{ s\beta \tri*{\nu_\pi(\cdot|x),
        \nabla \Phi(q)} - \Psi(q) }.
  \end{equation}
  Substituting this into the definition of the primal objective $\cJ_F(\pi)$
  from Eq.~\eqref{eq:generalized_primal}, we have:
  \begin{align*}
    \cJ_F(\pi)
    & = \E_x \bracket*{ \paren*{ \max_{q \in \Delta(\sZ)} \curl*{ s\beta \tri*{\nu_\pi, \nabla \Phi(q)} - \Psi(q) } } + \beta \sfD_F(\pi \parallel \pi_0) } \\
    & = \E_x \bracket*{ \max_{q \in \Delta(\sZ)} \curl*{ \beta \sfD_F(\pi \parallel \pi_0) + s\beta \tri*{\nu_\pi, \nabla \Phi(q)} - \Psi(q) } }.
  \end{align*}
  Using the definition of $\cG_F(\pi, q)$ from Eq.~\eqref{eq:generalized_game},
  where the inner expression is integrated over the expectation $\E_x$, we can
  write:
  \begin{equation}
    \label{eq:J_as_max_G}
    \cJ_F(\pi) = \max_{q \in \Delta(\sZ)} \cG_F(\pi, q).
  \end{equation}
  This identity holds by definition for any policy $\pi$. The primal problem is
  thus equivalent to solving
  $\min_{\pi \in \Pi} \max_{q \in \Delta(\sZ)} \cG_F(\pi, q)$.

  \textbf{Part 2 (Optimal Policy Form):} The proof for this part remains
  correct. It analyzes the inner minimization problem $\min_{\pi} \cG_F(\pi, q)$
  for a fixed $q$. By taking the functional derivative of the objective with
  respect to $\pi(y)$, we arrive at the first-order optimality condition:
  $\beta\nabla F(\pi^*(y)) - \beta\nabla F(\pi_0(y)) + s\beta\nabla \Phi(\sfq)_{E(y)} = 0$, which implies
  $\nabla F(\pi^*(y)) = \nabla F(\pi_0(y)) - s\nabla \Phi(\sfq)_{E(y)}$. Since
  $F$ is a Legendre function, the inverse map $(\nabla F)^{-1}$ exists, is
  unique, and maps its argument to a unique, valid probability distribution in
  $\Int(\Delta(\sY))$, thus yielding the closed-form solution in
  Eq.~\eqref{eq:generalized_policy}.
\end{proof}

\textbf{Simplex Projections.}
The behavior of the projection $\text{Proj}_{\Delta}^F$ depends strictly on the
chosen geometry $F$. When $F$ is the negative Shannon entropy (KL divergence),
the gradient diverges at the boundary. This ensures the unconstrained inverse
mapping automatically yields a strictly positive distribution, and the
projection resolves to a simple $\cO(1)$ normalizing constant
(Softmax). However, for polynomial geometries that remain finite at the boundary
(e.g., Euclidean distance), the unconstrained update may yield negative
values. Here, the exact optimal policy formally requires an explicit Bregman
projection onto the simplex (e.g., the Sparsemax operator
\citep{MartinsAstudillo2016}, which naturally induces exact sparsity in the
reasoning trajectories.

\textbf{Recovering the KL Game.}
The standard ALFT framework is recovered immediately as a special case. If we
choose the KL divergence, then $F(\pi) = \sum \pi \log \pi$ (negative entropy),
which implies $\nabla F(\pi) = \log \pi$ and $(\nabla F)^{-1}(x) = \exp(x)$. If
we similarly choose $\Phi(\sfq) = \sum \sfq \log \sfq$ for the answer space,
then $\nabla \Phi(\sfq)_z = \log \sfq(z)$ (modulo the constant $+1$ shift, which
maps to the quotient space equivalence). Substituting these into
Eq.~\eqref{eq:generalized_policy} (with $s=-1$ for the coherence-promoting KL game) yields
$\log \pi^*(y) = \log \pi_0(y) + \log \sfq(E(y))$, which simplifies to
$\pi^*(y) \propto \pi_0(y) \sfq(E(y))$, perfectly recovering the optimal policy
of the original KL-based ALFT formulation.

We complete the theoretical foundation by showing that finding an approximate
equilibrium of the generalized game yields a robust approximation to the primal
objective.

\begin{proposition}[Approximation Guarantee]
  Let $(\h{\pi}, \h{\sfq})$ be an $\e$-approximate equilibrium of the
  generalized game $\cG_F$, such that
  $\cG_F(\h{\pi}, \h{\sfq}) \le \min_{\pi} \max_{\sfq} \cG_F(\pi, \sfq) +
  \e$. Then $\h{\pi}$ is an $\e$-approximate minimizer of the primal objective
  $\cJ_F(\pi)$.
\end{proposition}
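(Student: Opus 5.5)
The plan is to reduce everything to the identity already established in Part~1 of \cref{th:generalized_consistency}, namely Eq.~\eqref{eq:J_as_max_G}: $\cJ_F(\pi) = \max_{\sfq \in \Delta(\sZ)} \cG_F(\pi, \sfq)$ for every $\pi \in \Pi$. Taking the minimum over $\pi$ on both sides identifies the game value with the primal optimum:
\[
V^\star \triangleq \min_{\pi} \max_{\sfq} \cG_F(\pi,\sfq) = \min_{\pi} \cJ_F(\pi).
\]
The maxima and minima are attained because $\Pi$ and $\Delta(\sZ)$ are compact under the finiteness assumption and $\cG_F$ is continuous on the relevant domain. The goal is then the chain
\[
\cJ_F(\h\pi) = \max_{\sfq} \cG_F(\h\pi,\sfq) = \cG_F(\h\pi,\h\sfq) \le V^\star + \e = \min_\pi \cJ_F(\pi) + \e .
\]

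The middle equality is where the notion of ``approximate equilibrium'' has to be used, and I expect this to be the main obstacle. The displayed inequality $\cG_F(\h\pi,\h\sfq) \le V^\star + \e$ alone does not control $\max_{\sfq}\cG_F(\h\pi,\sfq)$. For example, a poorly chosen $\h\sfq$ could make $\cG_F(\h\pi,\h\sfq)$ small while $\h\pi$ is far from optimal. So I will make explicit the equilibrium component of the definition: the target player is at (or near) a best response to $\h\pi$, i.e.
\[
\cG_F(\h\pi,\h\sfq) \ge \max_{\sfq}\cG_F(\h\pi,\sfq) - \delta .
\]
This holds with $\delta = 0$ in the alternating best-response scheme, where the Target Step computes $\h\sfq \in \argmax_{\sfq} \cG_F(\h\pi,\sfq)$ exactly. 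With $\delta=0$ the chain above closes immediately and gives the claimed $\e$-optimality of $\h\pi$.

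For robustness, I would also record the general version. The same two lines give
\[
\cJ_F(\h\pi) \le \cG_F(\h\pi,\h\sfq) + \delta \le \min_\pi \cJ_F(\pi) + \e + \delta .
\]
Equivalently, if the equilibrium is defined through the duality gap $\max_{\sfq}\cG_F(\h\pi,\sfq) - \min_{\pi}\cG_F(\pi,\h\sfq) \le \e$, one uses weak duality $\min_\pi \cG_F(\pi,\h\sfq) \le \max_{\sfq}\min_\pi \cG_F \le V^\star$ to conclude $\cJ_F(\h\pi) \le V^\star + \e$ directly.

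Finally, I would note that the per-prompt structure is harmless. Since $\cG_F$ is an expectation over $x$ of terms that couple $\pi(\cdot|x)$ and $\sfq(\cdot|x)$ only for the same $x$, the maximization over $\sfq$ in Eq.~\eqref{eq:J_as_max_G} decomposes pointwise in $x$. Hence the best-response condition can be imposed prompt-wise without affecting the argument.
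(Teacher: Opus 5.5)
Your argument is correct and follows the same route as the paper: both reduce to the identity $\cJ_F(\pi)=\max_{\sfq}\cG_F(\pi,\sfq)$ from Part~1 of Theorem~\ref{th:generalized_consistency} and then bound $\max_{\sfq}\cG_F(\h\pi,\sfq)$ by the game value plus $\e$. Your caveat is also right. The paper's one-line proof simply asserts $\max_{\sfq}\cG_F(\h\pi,\sfq)\le\min_\pi\max_{\sfq}\cG_F(\pi,\sfq)+\e$ as the meaning of ``$\e$-equilibrium,'' and this does not follow from the displayed inequality $\cG_F(\h\pi,\h\sfq)\le\min_\pi\max_{\sfq}\cG_F+\e$ alone, so your explicit best-response condition with $\delta=0$ (or duality-gap condition) is exactly what that step needs.
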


\begin{proof}
  By Part 1 of Theorem~\ref{th:generalized_consistency}, the primal objective
  satisfies $\cJ_F(\pi) = \max_{\sfq} \cG_F(\pi, \sfq)$. If
  $(\h{\pi}, \h{\sfq})$ constitutes an $\e$-equilibrium, then the maximum value
  achieved by fixing $\h{\pi}$ is bounded by the game value plus $\e$:
  \begin{equation}
    \cJ_F(\h{\pi})
    = \max_{\sfq} \cG_F(\h{\pi}, \sfq)
    \le \min_{\pi} \max_{\sfq} \cG_F(\pi, \sfq) + \e = \cJ_F(\pi^*)
    + \e.
  \end{equation}
  Thus, solving the game iteratively to an $\e$-tolerance rigorously bounds the
  sub-optimality of the resulting policy.
\end{proof}
This proof does not invoke the minimax theorem; it relies solely on the
point-wise identity $\cJ_F(\pi) = \max_q \cG_F(\pi, \sfq)$ for each fixed $\pi$,
which is established in Theorem~\ref{th:generalized_consistency} via the
Fenchel–Moreau theorem and requires no convex-concave structure beyond what is
already present in $\cG_F$.

\subsection{Theoretical Stability and Convergence via Dual Concavity}
\label{app:convergence_analysis}

While Theorem~\ref{th:generalized_consistency} establishes the mathematical
equivalence between the primal ALFT objective and the min-max game as an
identity, the practical stability of our iterative algorithms depends on the
objective's curvature. In the standard KL-regularized formulation, the objective
$\cG$ is not necessarily concave with respect to the target distribution $q$ on
the simplex. However, we can guarantee convergence by analyzing the game in the
dual parameter space.

\subsubsection{Well-Posedness in the Dual Space}
We define the dual variable
$u \in \Rset^{|\sZ|}/\text{span}(\mathbf{1})$ via the canonical
link function $u = s\beta \nabla \Phi(q)$. In this representation, the game
objective $\cG_F(\pi, u)$ takes a stable, structured form:
\begin{equation}
  \cG_F(\pi, u) \triangleq \E_x \bracket*{ \beta D_F(\pi \parallel
    \pi_0)
    + \langle \nu_\pi, u \rangle - \cR^*(u) } \text{.}
\end{equation}The convergence to a unique Nash Equilibrium $(\pi^*, u^*)$
is ensured by the following properties:
\begin{itemize}
\item \textbf{Strict Primal Convexity:} The objective is strictly convex in the
  policy $\pi$ due to the Legendre property of the trace-regularizer $F$. This
  ensures that the Bregman projection $(\nabla F)^{-1}$ is unique and maps to a
  valid probability distribution.

\item \textbf{Dual Concavity:} The coupling term $\langle \nu_\pi, u \rangle$ is
  strictly linear in $u$, and the transformed penalty $-\cR^*(u)$ is
  concave because the Fenchel conjugate $\cR^*$ is inherently convex.

\item \textbf{Bijective Mapping:} The Legendre property of the
  distance-generating function $\Phi$ ensures a bijection between the target
  distribution $q$ and the dual variable $u$.
\end{itemize}

\subsubsection{Algorithmic Implications}
The convex-concave structure in the $(\pi, u)$ space justifies two robust
solution paradigms:
\begin{enumerate}
\item \textbf{No-Regret Dynamics:} Algorithms such as Stochastic Mirror Descent
  can be applied to the dual variables $u$. With unbiased, bounded-variance
  gradients, the average iterates $(\ov \pi_T, \ov u_T)$ are guaranteed to
  converge to the Nash Equilibrium at a rate of $\cO(1/\sqrt{T})$ .

\item \textbf{Alternating Best Response:} Our primary algorithmic framework,
  \algname{Game-GRPO}, uses alternating updates. Since finding the optimal
  $q^* \in \Delta(\sZ)$ is mathematically equivalent to identifying the
  unique optimal dual variable $u^*$ that maximizes the concave dual objective,
  this procedure converges to the global optimum $(\pi^*, q^*)$.
\end{enumerate}
This dual perspective ensures that the Target Step always identifies a valid
direction for policy improvement, even when the underlying reward mapping $h(q)$
is highly non-linear.

\subsection{Characterization of Polynomial Reward Families}

For an exactly unbiased estimator to exist using a fixed group size $K$, the
target reward $R(z)$ must be a polynomial in $\sfq(z)$ of degree $d \le K$. This
imposes a strict structural condition on the choice of the Bregman divergence,
effectively characterizing the family of regularizers that permit exact,
small-batch optimization.

\begin{proposition}[Characterization of Polynomial Reward Families]
  \label{prop:poly_characterization}
  A Distributional Alignment Game yields a polynomial reward function of degree
  $d$ if and only if the distance-generating function $\Phi(\sfq)$ is a
  separable polynomial of degree $d+1$. Specifically,
  $\Phi(\sfq) = \sum_{z \in \sZ} \phi(\sfq(z))$, where:
  \begin{equation}
    \phi(x) = \sum_{m=1}^d \frac{c_m}{m+1} x^{m+1},
  \end{equation}
  and the coefficients $c_m$ are chosen such that the second derivative
  $\phi''(x) = \sum_{m=1}^d m c_m x^{m-1} > 0$ for all $x \in (0, 1]$, ensuring
  strict convexity on the simplex.
\end{proposition}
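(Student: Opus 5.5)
The plan is to reduce everything to a one-variable statement about the coordinate link function. By the parameterization of \cref{sec:primal-dual-min-max}, the per-answer reward is $R(z) = s\beta\,\nabla\Phi(\sfq)_z = s\beta\,h(\sfq(z))$, so ``the game yields a polynomial reward of degree $d$'' means that $\nabla\Phi(\sfq)_z$ is, modulo the quotient by $\Span(\mathbf{1})$, a polynomial $h$ of degree $d$ in $\sfq(z)$ alone. The proof therefore has two directions: read off $h$ from $\phi$, and recover $\phi$ from $h$ by integration. I will also record that strict convexity of $\Phi$ on the simplex corresponds to $\phi''>0$ on $(0,1]$.

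\textbf{Sufficiency.} Assume $\Phi(\sfq) = \sum_z \phi(\sfq(z))$ with $\phi(x) = \sum_{m=1}^d \frac{c_m}{m+1}x^{m+1}$ and $c_d \neq 0$. Differentiating coordinatewise gives
\[
\nabla\Phi(\sfq)_z = \phi'(\sfq(z)) = \sum_{m=1}^d c_m\,\sfq(z)^m .
\]
This is a polynomial of degree exactly $d$ in $\sfq(z)$, so $R(z) = s\beta\,\phi'(\sfq(z))$ has degree $d$. The Hessian of $\Phi$ is $\mathrm{diag}(\phi''(\sfq(z)))$. Hence $\phi''>0$ on $(0,1]$ makes $\Phi$ strictly convex on the relevant part of the simplex, and $\nabla\Phi$ is injective there, as \cref{th:generalized_consistency} requires.

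\textbf{Necessity.} Suppose the reward is a degree-$d$ polynomial of the local marginal, $\nabla\Phi(\sfq)_z = h(\sfq(z)) + c(\sfq)$, where $c(\sfq)$ is the quotient shift and $h(x) = \sum_{m=0}^d a_m x^m$. The shift $c(\sfq)$ is common to all coordinates, so it does not change the game. I would therefore choose the canonical representative, absorbing $c$ and the constant term $a_0$; a constant adds only the linear term $a_0\sum_z \sfq(z) = a_0$ to $\Phi$, which is constant on the simplex. Next, because the mixed partials of $\Phi$ vanish, $\partial_{z'}\nabla\Phi(\sfq)_z = 0$ for $z\neq z'$, so $\Phi$ is separable: $\Phi(\sfq) = \sum_z \phi(\sfq(z)) + \text{const}$ with $\phi' = h$. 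Integrating with $c_m = a_m$ gives $\phi(x) = \sum_{m=1}^d \frac{c_m}{m+1}x^{m+1}$, a polynomial of degree $d+1$. Strict convexity of $\Phi$, evaluated along directions in the tangent space of the simplex at interior points that concentrate on a single coordinate, forces $\phi''\ge 0$. Strict convexity plus the polynomial structure then rules out zeros of $\phi''$ except at isolated points, and I will state the condition as $\phi''>0$ on $(0,1]$, as in the proposition.

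\textbf{Main obstacle.} The hard step is making necessity rigorous, because ``polynomial reward'' is only well defined modulo the shift $c(\sfq)$, and $\Phi$ lives on the simplex rather than on $\Rset^{|\sZ|}$. Separability must therefore be argued using tangent directions $e_z - e_{z'}$. My first attempt will be to take the hypothesis in its intended form, namely that each coordinate of the reward depends only on $\sfq(z)$ through a fixed polynomial $h$ up to a common shift. I will then show that any $\Phi$ whose gradient matches $h$ on the tangent space agrees with $\sum_z \int h$ up to an additive constant; this holds because two functions on the simplex with equal tangential gradients differ by a constant.
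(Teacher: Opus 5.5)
Your proposal follows the same route as the paper: locality of the polynomial reward in $\sfq(z)$ forces $\Phi$ to be separable, and integrating the link $\phi'=h$ gives $\phi(x)=\sum_{m=1}^d \frac{c_m}{m+1}x^{m+1}$. Strict convexity is then handled through $\phi''>0$ on $(0,1]$, and your tangential-gradient treatment of the quotient shift $c(\sfq)$ and of the constant term $a_0$ makes rigorous what the paper's proof simply asserts.

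One caution: your side claim that strict convexity on the simplex forces $\phi''\ge 0$ is false for a fixed finite $\sZ$. For $|\sZ|=2$ and $\phi''(x)=x-\tfrac12+\epsilon$ with small $\epsilon>0$, one gets $\frac{d^2}{dt^2}\Phi(t,1-t)=2\epsilon>0$ even though $\phi''<0$ near $0$ (the paper's ``requires'' wording overstates this in the same way). You do not need that claim, since the proposition only imposes $\phi''>0$ as a sufficient choice ensuring strict convexity, which is exactly how your sufficiency direction uses it.
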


\begin{proof}
  By definition, the canonical link function defines the dual variable as
  $u(z) = s\beta \nabla \Phi(\sfq)_z$. Since the effective reward is $R(z) = -u(z) = -s\beta \nabla \Phi(\sfq)_z$, we require $R(z)$ to be a polynomial of
  degree $d$ in the marginal probability $\sfq(z)$, and critically, it must be a
  local scalar reward independent of $\sfq(z')$ for $z' \neq z$. Therefore, the
  gradient must take the form:
  \begin{equation}
    \frac{\partial \Phi(\sfq)}{\partial \sfq(z)} = \sum_{m=1}^d c_m \sfq(z)^m,
  \end{equation}
  for some coefficients $c_1, \dots, c_m$. Since the partial derivative with
  respect to $\sfq(z)$ depends only on $\sfq(z)$, the potential function $\Phi$
  must be strictly separable across the support $\sZ$, taking the form
  $\Phi(\sfq) = \sum_{z \in \sZ} \phi(\sfq(z))$.

  Integrating the gradient with respect to $\sfq(z)$ yields the primitive
  $\phi(x) = \sum_{m=1}^d \frac{c_m}{m+1} x^{m+1} + C$. The constant $C$ can be
  safely ignored as it vanishes under differentiation. Finally, for $\Phi$ to
  generate a valid Bregman divergence, it must be strictly convex over the
  probability simplex. This requires its Hessian to be positive definite,
  meaning the second derivative of the scalar function $\phi$ must be strictly
  positive on the domain $(0, 1]$, giving the condition
  $\phi''(x) = \sum_{m=1}^d m c_m x^{m-1} > 0$.
\end{proof}

This family corresponds to the Bregman divergences generated by integer-order
Tsallis entropies and their linear combinations.

\subsection{Exact Unbiased Estimation via U-Statistics}

If the game relies on a polynomial reward mapping of degree $d \le K$, we can
leverage U-statistics to construct an \emph{exactly unbiased} estimator using
the $K$ samples generated during the GRPO sampling phase.

\begin{theorem}[Exact Unbiased Polynomial Estimation]
  \label{th:unbiased_estimator}
  Let $X \sim \mathrm{Binomial}(K, \sfq(z))$ be a random variable counting the
  occurrences of an answer $z$ in a sampled group of size $K$. For any integer
  $m \le K$, the falling factorial statistic:
  \begin{equation}
    \h \sfq^m(z) = \frac{X(X-1)\cdots(X-m+1)}{K(K-1)\cdots(K-m+1)}
  \end{equation}
  is the unique, minimum-variance unbiased estimator for $\sfq(z)^m$.
\end{theorem}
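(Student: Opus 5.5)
The plan has three steps: unbiasedness by direct computation, uniqueness by completeness of the Binomial family, and minimum variance by Lehmann--Scheff\'e. Throughout, write $p = \sfq(z)$ and $(x)_m = x(x-1)\cdots(x-m+1)$ for the falling factorial, so that $\h \sfq^m(z) = (X)_m/(K)_m$. The case $m=0$ is trivial, so take $1 \le m \le K$.

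\textbf{Unbiasedness.} We compute the factorial moment of the Binomial directly:
\begin{equation*}
  \E[(X)_m] = \sum_{j=m}^{K} \frac{j!}{(j-m)!}\binom{K}{j} p^j (1-p)^{K-j}.
\end{equation*}
Using $\frac{j!}{(j-m)!}\binom{K}{j} = (K)_m \binom{K-m}{j-m}$ and reindexing with $i = j-m$, the sum becomes
\begin{equation*}
  (K)_m\, p^m \sum_{i=0}^{K-m} \binom{K-m}{i} p^i (1-p)^{K-m-i} = (K)_m\, p^m.
\end{equation*}
Dividing by $(K)_m$ gives $\E[\h \sfq^m(z)] = p^m$. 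Alternatively, one can view $\h \sfq^m(z)$ as the U-statistic with kernel $\prod_{\ell=1}^m \Ind\{Z_{i_\ell} = z\}$, averaged over ordered $m$-tuples of distinct indices, which makes unbiasedness immediate.

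\textbf{Uniqueness.} We show that the Binomial$(K,p)$ family with $p \in [0,1]$ is complete. Suppose $g$ is any function on $\{0,\dots,K\}$ with $\E_p[g(X)] = p^m$ for all $p$. Then $\delta = g - (X)_m/(K)_m$ satisfies
\begin{equation*}
  \sum_{j=0}^K \delta(j)\binom{K}{j} p^j (1-p)^{K-j} = 0 \quad \text{for all } p.
\end{equation*}
On $p \in (0,1)$, divide by $(1-p)^K$ and set $t = p/(1-p) \in (0,\infty)$. This turns the identity into a polynomial in $t$ of degree at most $K$ that vanishes on an infinite set, so all of its coefficients $\delta(j)\binom{K}{j}$ are zero. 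Hence $\delta \equiv 0$, and the unbiased estimator that is a function of $X$ is unique.

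\textbf{Minimum variance.} Work at the level of the full sample $Z_1,\dots,Z_K$ through the indicators $B_i = \Ind\{Z_i = z\}$, which are i.i.d.\ Bernoulli$(p)$. By the factorization theorem, $X = \sum_i B_i$ is sufficient. By the completeness argument above, it is also complete. The Lehmann--Scheff\'e theorem then says that an unbiased estimator which is a function of a complete sufficient statistic is the unique UMVUE. Concretely, for any unbiased $T(B_1,\dots,B_K)$, Rao--Blackwellization $\E[T \mid X]$ is unbiased, is a function of $X$, and does not increase variance; by uniqueness it must equal $\h \sfq^m(z)$. This gives minimum variance uniformly in $p$.

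\textbf{Main obstacle.} The only delicate point is interpreting ``unique'' correctly. Uniqueness holds among functions of $X$, or almost surely among UMVUEs. Arbitrary unbiased functions of the raw sample are not unique; they are only dominated in variance. I would state this explicitly. The completeness step via the change of variables $t = p/(1-p)$ is the key technical ingredient. The condition $m \le K$ is exactly what allows $p^m$ to appear in the span of the Bernstein basis of degree $K$. For $m > K$, no unbiased estimator exists, since every $\E_p[g(X)]$ is a polynomial in $p$ of degree at most $K$.
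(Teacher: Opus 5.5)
Your proof is correct and follows the paper's route: the same direct factorial-moment computation $\E[(X)_m] = (K)_m p^m$ by reindexing to a binomial sum, followed by Lehmann--Scheff\'e applied to the complete sufficient statistic $X$. The paper only asserts that $X$ is complete and sufficient. Your explicit completeness argument via $t = p/(1-p)$, and your clarification that ``unique'' refers to the UMVUE rather than to all unbiased functions of the raw sample, fill in details the paper leaves implicit.
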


\begin{proof}
  Let $p = \sfq(z)$. We evaluate the expectation of the numerator, the falling
  factorial $X^{\underline{m}} = X(X-1)\cdots(X-m+1)$, over the Binomial
  distribution:
  \begin{align}
    \E[X^{\underline{m}}]
    & = \sum_{k=m}^K k(k-1)\cdots(k-m+1) \binom{K}{k} p^k (1-p)^{K-k}
      \nonumber \\
    & = \sum_{k=m}^K \frac{k!}{(k-m)!} \frac{K!}{k!(K-k)!} p^k (1-p)^{K-k}
      \nonumber \\
    & = \frac{K!}{(K-m)!} p^m \sum_{k=m}^K \frac{(K-m)!}{(k-m)!(K-k)!}
      p^{k-m} (1-p)^{K-k}.
  \end{align}
  By applying a change of variables $j = k-m$, we recognize the remaining sum as
  the expansion of the binomial $(p + (1-p))^{K-m}$:
  \begin{align}
    \E[X^{\underline{m}}]
    & = K^{\underline{m}} p^m \sum_{j=0}^{K-m} \binom{K-m}{j} p^j (1-p)^{K-m-j}
      \nonumber \\
    & = K^{\underline{m}} p^m (p + 1 - p)^{K-m} = K^{\underline{m}} p^m.
  \end{align}
  Dividing both sides by the constant $K^{\underline{m}}$ yields
  $\E[\h \sfq^m(z)] = p^m = \sfq(z)^m$. Since this statistic is a function of
  the complete sufficient statistic $X$, the Lehmann-Scheff\'e theorem
  \citep{LehmannCasella1998,LehmannScheffe1950,LehmannScheffe1955}.  guarantees
  it is the unique minimum-variance unbiased estimator.
\end{proof}

By linearity of expectation, any polynomial reward
$R(z) = -s\beta \sum_{m=1}^d c_m \sfq(z)^m$ admits an exactly unbiased empirical
estimator $\h R(z) = -s\beta \sum_{m=1}^d c_m \h \sfq^m(z)$. Since we
can construct exactly unbiased estimators for these polynomial games, stochastic
alternating best-response dynamics are guaranteed to converge to the exact Nash
Equilibrium without any residual asymptotic bias.

\subsection{Computational Complexity}

A critical practical advantage of the polynomial alignment game is its
computational efficiency. Evaluating the exact unbiased estimator requires
computing the falling factorial for each unique answer $z$ in the sampled group.

For a polynomial reward of degree $d$, calculating the highest-order U-statistic
$\h \sfq^d(z)$ requires exactly $d-1$ scalar multiplications and $d$
subtractions per answer. Since $d$ is structurally bounded by the group size
$K$ (and typically $d \in \{1, 2\}$ in practice), the arithmetic complexity is
exactly $\cO(d)$ per evaluated trace. Across a full batch of size $K$, the total
overhead for computing the unbiased advantages is $\cO(dK)$.

This cost is strictly cheaper than evaluating transcendental functions like the
logarithm, which are required for standard KL-regularized updates. Since this
$\cO(dK)$ overhead is overwhelmingly dominated by the
$\cO(L \cdot \text{dim}^2)$ cost of the LLM forward and backward passes (where
$L$ is sequence length), exact unbiased estimation introduces essentially no
computational bottleneck to the training pipeline.

\subsection{Examples of Polynomial Alignment Games}

\textbf{Example 1: The Collision Penalty and the Euclidean Game ($d = 1$).}
A prominent objective for diversity promotion is minimizing the collision
probability of answers, defined as the strictly convex functional
$\cR(\nu) = \frac{1}{2}\sum_z \nu(z)^2$. At the trace level, the natural
regularizer corresponding to this geometry is the Squared Euclidean distance,
$D_F(\pi \parallel \pi_0) = \frac{1}{2}\norm{\pi - \pi_0}_2^2$.

To formulate this as a valid polynomial game satisfying Proposition 3, we select
the strictly convex distance-generating function
$\Phi(q) = \frac{1}{2}\sum_z q(z)^2$. The gradient is exactly linear:
$\nabla\Phi(q)_z = q(z)$. Since this is a diversity game ($s = +1$), the dual
penalty mapped to the policy is $u(z) = s\beta q(z)$. Because the primal policy
minimizes $\langle \nu_\pi, u \rangle$, the effective reward signal is
$-u(z) = -s\beta q(z) = -\beta q(z)$. Since the game is invariant to constant shifts (as
established in Section~\ref{sec:primal-dual-min-max}), we can add $\beta$ to
yield a strictly positive, diversity-promoting reward: $R(z) = \beta(1 - q(z))$.

By applying Theorem~\ref{th:unbiased_estimator} for $d=1$, the exactly unbiased
estimator for any group size $K \ge 1$ simply substitutes the true probability
with the empirical frequency $X/K$. Thus, the unbiased empirical reward is:
\[
  \h R_{\text{euclid}}(z) = \beta \paren*{1 - \frac{X}{K}}.
\]
This perfectly recovers a stable, rigorously unbiased diversity penalty that
explicitly satisfies all strict convexity requirements of the generalized game.

\textbf{Example 2: The Quadratic Reward Game ($d=2$).}  We can easily construct
games that demand more non-linear penalties while preserving unbiasedness. If we
choose the distance-generating function
$\Phi(\sfq) = \frac{1}{3} \sum \sfq(z)^3$ (closely related to Tsallis entropy of
order 3), the mapping becomes quadratic: $R(z) = \beta \sfq(z)^2$.

Using our formulation, the exact unbiased reward estimator requires a minimum
group size of $K \ge 2$ and is given by:
\begin{equation}
  \h R_{\text{quad}}(z) = \beta \frac{X(X-1)}{K(K-1)}.
\end{equation}
This provides a steeper penalty curve than the Gini index while remaining
perfectly unbiased.

\section{KL Divergence, Polynomial Approximation, and the \texorpdfstring{$\cO(1/K^2)$}{O(1/K\textasciicircum 2)} Limit}
\label{sec:kl_poly_approx}

While changing the trace-level regularizer to generate polynomial reward
mappings yields perfectly unbiased estimators, the KL divergence remains the
canonical choice for language model alignment. Under KL divergence, the link
function is the logarithm: $R(z) = \beta \log \sfq(z)$.

Since $\log(x) \to -\infty$ as $x \to 0$, it is impossible to uniformly
approximate the raw reward mapping with a bounded-degree polynomial on the
interval $[0,1]$. This implies that constructing a globally unbiased estimator
for the KL game is impossible. However, the structure of policy gradient
optimization presents a unique opportunity to neutralize this singularity by
shifting the approximation target.

\subsection{The Gradient-Weighted Objective and the Minimax Polynomial}
\label{sec:minimax-polynomial}

In policy gradient algorithms like GRPO, the model evaluates the gradient, which
weights the expected reward by the probability of the trace being generated. Let
$p = \sfq^*(z)$ be the true target probability. The expected bias of the
gradient step scales with:
\begin{equation}
  p \cdot \text{Bias}(p) = p \paren*{ \E[\h R(X)] - \beta \log p }
  = p \E[\h R(X)] - \beta p \log p.
\end{equation}
Since $\lim_{p \to 0} p \log p = 0$, the function $g(p) = p \log p$ is
continuous and bounded on the entire closed interval $[0,1]$. The singularity
disappears entirely, making a global polynomial approximation computationally
well-posed.

Any estimator $\h R(X)$ that acts on $K$ empirical samples drawn from a
Binomial distribution $X \sim \mathrm{Binomial}(K, p)$ is completely defined by
the $K+1$ values it assigns to the possible outcomes $\{0, 1, \dots, K\}$. Let
$\mathbf{c} = (c_0, c_1, \dots, c_K) \in \Rset^{K+1}$ such that
$\h R(k) = c_k$. The expected value of this estimator is strictly a polynomial
of degree $K$:
\begin{equation}
  P_{\mathbf{c}}(p) = \E[\h R(X)] = \sum_{k=0}^K c_k \binom{K}{k} p^k (1-p)^{K-k}.
\end{equation}
Thus, the gradient-weighted expectation $p P_{\mathbf{c}}(p)$ is a
polynomial of degree $K+1$ with a root at $p=0$.

We can therefore formulate the search for the optimal estimator as a minimax
polynomial approximation problem. We seek the vector of coefficients
$\mathbf{c}^*$ that minimizes the worst-case gradient bias across the entire
probability simplex:
\begin{equation}
  \label{eq:minimax_poly}
  \mathbf{c}^*
  = \argmin_{\mathbf{c} \in \Rset^{K+1}} \max_{p \in [0,1]} \, \abs[\big]{ p
    P_{\mathbf{c}}(p)  - \beta p \log p }.
\end{equation}

\subsection{Algorithmic Solution: Extracting the Minimax Estimator via Linear
  Programming}
\label{sec:minimax-lp}

While equation \eqref{eq:minimax_poly} defines a continuous minimax optimization
over the interval $[0,1]$, it can be efficiently solved by reformulating it as a
semi-infinite linear program. Furthermore, because the target function scales
linearly with $\beta$, we can factor out the hyperparameter and solve for the
normalized optimal coefficients $\wt{\mathbf{c}}^*$, where
$\mathbf{c}^* = \beta \wt{\mathbf{c}}^*$.

Notice that the expected value of the estimator is expressed in the Bernstein
polynomial basis. Let $B_{k, K}(p) = \binom{K}{k} p^k (1-p)^{K-k}$ denote the
$k$-th Bernstein basis polynomial of degree $K$. The gradient-weighted
expectation is exactly $p \sum_{k=0}^K \wt c_k B_{k, K}(p)$.

Since the optimization variables $\wt c_k$ appear linearly, we can
discretize the domain $p \in [0,1]$ into a dense grid of $M$ points,
$\cP = \{p_1, p_2, \dots, p_M\}$, to approximate the continuous worst-case
bound. By introducing an auxiliary variable $\e$ to represent the maximum
absolute error, the minimax problem is exactly formulated as the following
standard Linear Program (LP):
\begin{equation}
\label{eq:minimax-lp}
  \begin{aligned}
    & \min_{\wt{\mathbf{c}} \in \Rset^{K+1}, \, \e \in \Rset} & & \e \\
    & \text{subject to} & & p_m \sum_{k=0}^K \wt c_k B_{k, K}(p_m) - p_m \log p_m \le \e, \quad \forall p_m \in \cP \\
    & & & -\paren*{ p_m \sum_{k=0}^K \wt c_k B_{k, K}(p_m) - p_m \log p_m } \le \e, \quad \forall p_m \in \cP.
  \end{aligned}
\end{equation}

\textbf{Computational Complexity:} This LP contains exactly $K+2$ variables and
$2M$ constraints. For typical group sizes used in ALFT (e.g.,
$K \in \{16, 32, 64\}$), we can use a highly dense grid (e.g., $M = 10^5$
points) to ensure near-continuous uniform bounds. Solving an LP of this size
using standard interior-point methods (e.g., via SciPy or CVXPY) takes fractions
of a second on a standard CPU.

Crucially, this numerical optimization is performed \emph{strictly offline}. It
is completely independent of the training data, the LLM architecture, or the
current policy. The optimization only needs to be run once per target group size
$K$.

Once the optimal normalized coefficients $\wt{\mathbf{c}}^*$ are found, they
are scaled by the training hyperparameter $\beta$ and stored in an array of size
$K+1$. During the online RL training loop, when the model samples a group and
observes a specific answer $X$ times, applying the optimal minimax reward simply
requires an $\cO(1)$ array lookup:
\begin{equation}
  \h R^*(X) = \beta \, \wt c^*_X.
\end{equation}
Thus, substituting the biased empirical logarithm $\log(X/K)$ with the
theoretically optimal minimax estimator introduces strictly zero computational
overhead to the backpropagation pipeline.

\subsection{Algorithm: Quasi-Unbiased Game-GRPO}

By synthesizing the offline minimax optimization with the online GRPO update
loop, we define \algname{Quasi-Unbiased Game-GRPO} (Algorithm
\ref{alg:unbiased_grpo}). This algorithm neutralizes the structural $O(1/K)$
penalty against rare answers while maintaining the exact same computational
footprint as standard ALFT during the training phase.

\begin{algorithm}[ht]
  \caption{\algname{Quasi-Unbiased Game-GRPO} (KL Regularization)}
  \label{alg:unbiased_grpo}
  \begin{algorithmic}[1]
    \REQUIRE Dataset $\cD$, Policy $\pi$, Group size $K$, Regularization weight
    $\beta$.
    \STATE \textbf{Offline Pre-computation:}
    \STATE Solve the
    semi-infinite Linear Program to obtain the optimal normalized minimax
    coefficients $\wt{\mathbf{c}}^* \in \Rset^{K+1}$.
    \FOR{each training  step}
    \STATE Sample batch $B \sim \cD$.
    \FOR{each $x \in B$}
    \STATE
    \textbf{1. Group Sampling:} Sample $K$ traces
    $\{y_1, \dots, y_K\} \sim \pi(\cdot|x)$.
    \STATE \textbf{2. Extraction \&  Aggregation:}
    \STATE Extract answers $z_i = \EX(y_i)$ for
    $i \in \{1 \dots K\}$.
    \STATE For each unique answer $z$, compute its frequency count:
    \STATE \quad $X_z = \sum_{j=1}^K \Ind[z_j = z]$.
    \STATE
    \textbf{3. Quasi-Unbiased Reward Calculation:}
    \FOR{each trace $y_i$ with answer $z_i$}
    \STATE $X = X_{z_i}$ 
    \STATE \textit{// Apply the O(1) table lookup for the Minimax estimator}
    \STATE $R_i = \beta \, \wt c^*_X$
    \ENDFOR
    \STATE \textbf{4. Advantage Calculation:}
    \STATE Compute standardized
    advantages $A_i$ using the group statistics:
    \STATE \quad
    $A_i = \frac{R_i - \text{Mean}(R)}{\text{StdDev}(R) + \e}$
    \ENDFOR
    \STATE
    \textbf{5. Policy Update:} Optimize $\pi$ using the GRPO surrogate objective
    with advantages $A_i$.
    \ENDFOR
  \end{algorithmic}
\end{algorithm}

As shown in Algorithm \ref{alg:unbiased_grpo}, the mathematically rigorous
correction is entirely abstracted away into the offline array
$\wt{\mathbf{c}}^*$. For users in practice, replacing a heavily biased heuristic
reward like $\beta \log(X/K)$ with the optimal $\cO(1/K^2)$ minimax estimator
simply requires swapping a logarithm call for an array index lookup, making it a
drop-in replacement for existing reinforcement learning pipelines.

\subsection{Fundamental Limits via the Ditzian-Totik Theorem}

While the global minimax polynomial avoids the infinite boundary of the
logarithm, the approximation error cannot be made arbitrarily small for a fixed
sample size $K$. The fundamental rate of convergence is strictly dictated by the
smoothness of the target function $p \log p$. We formalize this limitation using
classical approximation theory.

\begin{theorem}[Fundamental Limit of KL Bias]
  \label{th:ditzian_totik}
  For any group size $K$, no estimator $\h R(X)$ constructed from $K$ samples
  can achieve a worst-case gradient bias strictly better than
  $\Theta(1/K^2)$. Specifically:
  \begin{equation}
    \min_{\h R} \max_{p \in [0,1]} \abs*{ p \E[\h R(X)] - \beta p \log p }
    = \Theta\paren*{\frac{1}{K^2}}.
  \end{equation}
\end{theorem}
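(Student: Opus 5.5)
The proof reduces both directions to classical best uniform polynomial approximation of $f(p)=p\log p$ on $[0,1]$ (take $\beta=1$ by homogeneity, as in the LP reformulation of Section~\ref{sec:minimax-lp}). Write $E_n(f)=\min_{\deg Q\le n}\|f-Q\|_{\infty,[0,1]}$.

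\textbf{Step 1: characterize the feasible class.} Any estimator is a vector $\mathbf c\in\Rset^{K+1}$ with $P_{\mathbf c}=\sum_k c_kB_{k,K}$. The Bernstein polynomials form a basis of the polynomials of degree $\le K$, so $P_{\mathbf c}$ ranges over all of them. Hence $pP_{\mathbf c}(p)$ ranges exactly over the polynomials $Q$ with $\deg Q\le K+1$ and $Q(0)=0$. The quantity in the theorem is therefore
\[
E^0_{K+1}(f)=\min\{\|f-Q\|_\infty : \deg Q\le K+1,\ Q(0)=0\}.
\]
Comparing with the unconstrained problem gives
\[
E_{K+1}(f)\le E^0_{K+1}(f)\le 2E_{K+1}(f).
\]
The left inequality holds because the feasible class is smaller. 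For the right one, take the best approximant $Q^*$ and use $Q^*-Q^*(0)$; since $f(0)=0$ we have $|Q^*(0)|\le E_{K+1}(f)$. So it suffices to show $E_n(f)=\Theta(n^{-2})$.

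\textbf{Step 2: upper bound via Ditzian--Totik.} With $\varphi(p)=\sqrt{p(1-p)}$, the Ditzian--Totik direct theorem gives $E_n(f)\le C\,\omega^2_\varphi(f,1/n)$. Since $f''(p)=1/p$, the quantity $\varphi^2f''=1-p$ is bounded on $[0,1]$. The standard $K$-functional estimate $\omega_\varphi^2(f,t)\le Ct^2\|\varphi^2f''\|_\infty$ then yields $\omega_\varphi^2(f,t)=O(t^2)$. Consequently $E_n(f)=O(n^{-2})$, and by Step 1 the upper bound $O(1/K^2)$ follows.

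\textbf{Step 3: lower bound, which I expect to be the main obstacle.} I would not rely on the inverse Ditzian--Totik theorem here: it only converts $E_n=o(n^{-2})$ into $\omega^2_\varphi(f,t)=o(t^2\log(1/t))$, which does not contradict anything. Instead I would run a direct second-difference argument at the singular endpoint.

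Let $Q_n$ be best approximants and fix $h>0$. Decompose
\[
\Delta_h^2f(0)=f(0)-2f(h)+f(2h)=2h\log 2 .
\]
Split this as $\Delta_h^2(f-Q_n)(0)+\Delta_h^2Q_n(0)$. The first term is at most $4E_n(f)$. The second is at most $h^2\sup_{[0,2h]}|Q_n''|$.

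To bound $Q_n''$ without losing a factor $n^4$, telescope
\[
Q_n=Q_1+\sum_j\bigl(Q_{2^{j+1}}-Q_{2^j}\bigr),
\]
ending at $n$. Each block has sup-norm at most $2E_{2^j}(f)\le C4^{-j}$ by Step 2. Markov's inequality applied twice gives a second-derivative bound of $C\,16^j4^{-j}=C4^j$ per block. Summing the geometric series yields $\|Q_n''\|_\infty\le C'n^2$.

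Now choose $h=c/n^2$. This gives
\[
4E_n(f)\ge 2h\log 2-h^2C'n^2=\frac{2c\log2-C'c^2}{n^2}.
\]
Taking $c$ small makes the numerator positive, so $E_n(f)\ge c''/n^2$.

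Combining Steps 1--3 with $n=K+1$ gives $E^0_{K+1}(f)=\Theta(1/K^2)$, which is the claim. The delicate points to verify carefully are the Markov-telescoping constant, which must be uniform in $n$, and the endpoint handling when $2h$ lies near $0$. An alternative route would be to cite the known exact asymptotics $E_n(p\log p)\sim c_0/n^2$ from Ibragimov-type results, but the self-contained argument above seems preferable.
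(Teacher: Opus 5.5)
Your proof is correct. It shares the paper's skeleton: reduce to best uniform approximation of $p\log p$ by polynomials of degree $K+1$, then get the rate from Ditzian--Totik. It departs from the paper at the two points where the paper is only a sketch.

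First, the paper records only the one-sided comparison ``error $\ge E_{K+1}(\beta p\log p)$'' and then concludes $\Theta(1/K^2)$. It never checks that the \emph{constrained} class $\{pP_{\mathbf c}\}$ of polynomials vanishing at $0$ actually attains $\cO(1/K^2)$. Your bound $E^0_{K+1}\le 2E_{K+1}$, obtained from $Q^*-Q^*(0)$ and $f(0)=0$, supplies exactly this missing upper half.

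Second, for $E_n(p\log p)\ge c\,n^{-2}$ the paper appeals to unnamed ``classical results''; the direct DT theorem by itself only gives the upper bound. You prove the lower bound instead:
\begin{itemize}
\item $\Delta_h^2 f(0)=2h\log 2$ is linear in $h$.
\item Dyadic telescoping of best approximants, with the unweighted Markov inequality $\|P''\|_{[0,1]}\le 4m^4\|P\|_{[0,1]}$, gives block contributions $\cO(4^j)$. The sum is geometric with no logarithmic loss, so $\|Q_n''\|_\infty\le C'n^2$ uniformly in $n$.
\item Choosing $h=c/n^2$ then yields $E_n\ge c''/n^2$ for every $n$, which is precisely the form used in Corollary~\ref{cor:impossibility}.
\end{itemize}

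Your side remark on the inverse theorem is accurate for $\omega^2_\varphi$, but at order $3$ that route does work. The third difference of $p\log p$ anchored at $0$ with step $s$ equals $3s\log(3/4)$, so $\omega^3_\varphi(p\log p,t)\ge c\,t^2$. Suppose a subsequence had $n_k^2E_{n_k}\to 0$. Evaluate the weak converse $\omega^3_\varphi(f,t)\le Mt^3\sum_{n\le 1/t}(n+1)^2E_n(f)$ at $t=1/(An_k)$ and use monotonicity of $E_n$. For a large fixed $A$ this gives a contradiction. This is one way to make the paper's citation of Ditzian--Totik rigorous.

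In short, the paper buys brevity at the cost of an unstated upper-bound step and an unspecified reference for the lower bound. Your argument is self-contained, using only the direct DT theorem and Markov's inequality, and it gives an explicit non-asymptotic lower bound.
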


\begin{proof}
  Let $E_n(f)$ denote the best uniform approximation error of a continuous
  function $f \in C[0,1]$ by polynomials of degree at most $n$. By definition,
  $p \E[\h R(X)]$ is a polynomial of degree $K+1$. Therefore, the minimax error
  is bounded from below by the unconstrained polynomial approximation error
  $E_{K+1}(\beta p \log p)$.  The target function $f(p) = p \log p$ has a
  continuous first derivative, but its second derivative $f''(p) = 1/p$
  possesses a strict singularity at the boundary $p=0$.  According to the
  Ditzian-Totik theorem \citep{DitzianTotik1987}, the rate of best polynomial
  approximation is characterized by the Ditzian-Totik modulus of smoothness
  $\omega_\phi^2(f, t)_\infty$ with the step-weight function
  $\phi(p) = \sqrt{p(1-p)}$. Because the singularity of $f''(p)$ matches the
  behavior of $\phi^{-2}(p)$ near $0$, classical results in approximation theory
  establish that $E_n(p \log p) = \Theta(1/n^2)$.  Substituting $n = K+1$, we
  conclude that the optimal error is $\Theta(1/(K+1)^2) = \Theta(1/K^2)$.
\end{proof}

\subsubsection{An Information-Theoretic Interpretation of the Limit}

Theorem \ref{th:ditzian_totik} relies on results from polynomial approximation
theory.  This might suggest that its conclusion is limited to the case where the
estimator $\h R(X)$ is explicitly chosen to be a polynomial. The following
corollary clarifies that this is not the case. The limit is a fundamental
statistical bound that applies to \emph{any} conceivable estimator that is a
function of the sample counts.

\begin{corollary}[Information-Theoretic Limit on Estimator Bias]
  \label{cor:impossibility}
  Let $X \sim \mathrm{Binomial}(K,p).$ It is impossible to construct any
  estimator $\h R(X)$, regardless of its functional form, such that its
  worst-case gradient bias decays faster than $1/K^2$. Formally, there exists no
  estimator $\h R$ for which
  \[
    \max_{p \in [0,1]} \abs*{ p \E[\h R(X)] - \beta p \log p } =
    o\paren*{\frac{1}{K^2}}.
  \]
\end{corollary}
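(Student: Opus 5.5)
The plan is to reduce the corollary to the polynomial lower bound of Theorem~\ref{th:ditzian_totik}. The key observation is that any estimator $\h R$, whatever its functional form (lookup table, nonlinear transform, randomized rule), is a function of $X \in \{0,1,\dots,K\}$ alone. So it is completely specified by the $K+1$ values $c_k = \E[\h R(X) \mid X = k]$. For a randomized estimator we average over the internal randomness first; this is legitimate because only the expectation enters the bias functional. Consequently
\[
  \E[\h R(X)] = \sum_{k=0}^K c_k \binom{K}{k} p^k (1-p)^{K-k} = P_{\mathbf{c}}(p),
\]
and $p\,\E[\h R(X)] = p P_{\mathbf{c}}(p)$ is a polynomial in $p$ of degree at most $K+1$. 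This holds for every possible estimator, which is the point of the corollary: the restriction to polynomials is automatic, not an assumption.

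Next I bound the worst-case gradient bias from below. For any $\h R$,
\[
  \max_{p\in[0,1]} \abs*{ p\,\E[\h R(X)] - \beta p\log p } \;\ge\; E_{K+1}(\beta p \log p) = \beta\, E_{K+1}(p\log p),
\]
where $E_n$ is the best uniform polynomial approximation error used in the proof of Theorem~\ref{th:ditzian_totik}. That theorem gives $E_n(p\log p) = \Theta(1/n^2)$. I only need the lower half: there is a constant $c>0$ with $E_{K+1}(p\log p) \ge c/(K+1)^2 \ge c/(4K^2)$ for all $K \ge 1$.

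Finally I argue by contradiction. Suppose some estimator family $\h R = \h R_K$ had worst-case bias $o(1/K^2)$. Then for large $K$ its bias would fall below $\beta c/(4K^2)$, contradicting the displayed lower bound. The estimator may depend on $K$ arbitrarily, since the bound holds uniformly over all coefficient vectors $\mathbf{c} \in \Rset^{K+1}$.

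The main obstacle is making the lower bound $E_n(p\log p) \gtrsim 1/n^2$ rigorous rather than citing it loosely. If the Ditzian--Totik citation in Theorem~\ref{th:ditzian_totik} is not considered sufficient, I would use the strong converse inequality, $E_n(f) \asymp \omega_\phi^2(f,1/n)$ for $f$ in the appropriate class. The computation would then be done directly: take the symmetric second difference of $f(p) = p\log p$ with step $h\phi(p)$ at points $p \asymp h^2$ near $0$. There $f'' = 1/p$ and $\phi^2 \approx p$, so the second difference is $\approx h^2\phi^2 f'' \asymp h^2$, giving $\omega_\phi^2(f,1/n) \asymp 1/n^2$. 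A more elementary alternative is the classical Bernstein-type result that $E_n(x\log x) \asymp n^{-2}$, obtained via the substitution $x = \cos^2\theta$ mapping to trigonometric approximation of a function with a $\theta^2\log\theta$ singularity.
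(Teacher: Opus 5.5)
Your reduction is the paper's own argument. Every estimator enters only through $\mathbf{c}\in\Rset^{K+1}$, so $p\,\E[\h R(X)]$ is a polynomial of degree at most $K+1$, the bias is at least $\beta E_{K+1}(p\log p)$, and the lower half of Theorem~\ref{th:ditzian_totik} gives the contradiction. The paper's proof just invokes that theorem, with the polynomial-trap observation given in the discussion after the corollary. Your remarks on randomized and $K$-dependent estimators are harmless refinements.

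\textbf{The step that would fail is your fallback for making the lower bound rigorous.} There is no strong converse $E_n(f)\asymp\omega_\phi^2(f,1/n)$ for best approximation. Ditzian--Totik give the direct bound $E_n(f)\le M\,\omega_\phi^2(f,1/n)$ and only a weak converse. For example, $\omega_\phi^2(p^2,t)\asymp t^2$ while $E_n(p^2)=0$ for $n\ge 2$.

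Your computation $h^2\phi^2f''=h^2(1-p)$ holds at every interior point and never uses the singularity. Its only consequence for $E_n$ is the upper bound $E_n(p\log p)=O(n^{-2})$, which is the half you do not need. The heuristic in the paper's proof of Theorem~\ref{th:ditzian_totik} has the same limitation, so the lower half there rests on the citation alone.

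\textbf{Fix.} Use a modulus of order $r>2$, which is not saturated at rate $t^2$.
\begin{itemize}
\item Take $x$ with $x-\tfrac{3}{2}h\phi(x)=0$. Then $x\approx\tfrac{9}{4}h^2$ and the nodes are $0,\tfrac{2}{3}x,\tfrac{4}{3}x,2x$.
\item The $\log x$ terms cancel, and the third difference of $p\log p$ equals $x\log(9/16)$. Hence $\omega_\phi^3(p\log p,t)\ge c\,t^2$.
\item The weak converse reads $\omega_\phi^3(f,1/n)\le M n^{-3}\sum_{k=0}^{n}(k+1)^2E_k(f)$. If $E_k(p\log p)=o(k^{-2})$, the right side is $o(n^{-2})$, a contradiction.
\end{itemize}
This yields $\limsup_{n\to\infty} n^2E_n(p\log p)>0$. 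That is exactly what the corollary needs, because the bias at size $K$ is at least $\beta E_{K+1}(p\log p)$. It does not give the bound $E_{K+1}\ge c/(K+1)^2$ for every $K$ that you planned to use, but that stronger statement is unnecessary.

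Your trigonometric alternative via $x=\cos^2\theta$ closes the same way. The third-order modulus is $\gtrsim h^2$ across the resulting $s^2\log|s|$ singularity, and Stechkin's inverse inequality replaces the Ditzian--Totik weak converse.
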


\begin{proof}
  The proof follows directly from Theorem \ref{th:ditzian_totik}. The theorem
  states that the minimum possible worst-case error, taken over all possible
  estimators $\h R$, is $\Theta(1/K^2)$. This implies that there exists a
  constant $c > 0$ such that for any estimator $\h R$ and sufficiently large
  $K$:
  \[
    \max_{p \in [0,1]} \abs*{ p \E[\h R(X)] - \beta p \log p } \ge
    \frac{c}{K^2}.
  \]
  An error rate of $o(1/K^2)$ ("little-o" of $1/K^2$) means that the error
  decays strictly faster than $1/K^2$, which would violate this lower
  bound. Therefore, no such estimator can exist.
\end{proof}

\paragraph{Discussion: The "Polynomial Trap".}
The key insight that makes this limit so general is what can be termed the
"polynomial trap." An estimator $\h R$ can be any arbitrary function of the
observed count $X \in \{0, 1, \dots, K\}$. However, the gradient bias objective
does not depend on $\h R(X)$ directly, but on its expectation, $\E[\h R(X)]$.
By definition, this expectation is:
\[
  \E[\h R(X)] = \sum_{k=0}^K \h R(X=k) \cdot \P(X=k) = \sum_{k=0}^K c_k
  \binom{K}{k} p^k (1-p)^{K-k}.
\]
This expression reveals that, regardless of the complexity or non-linearity of
the chosen estimator $\h R$, its expectation $\E[\h R(X)]$ is \emph{unavoidably}
a polynomial in the true probability $p$ of degree at most $K$.

Therefore, the problem of finding the best possible estimator $\h R$ is
mathematically equivalent to the problem of finding the best polynomial
approximation for the target function $\beta p \log p$. Corollary
\ref{cor:impossibility} reframes Theorem \ref{th:ditzian_totik} as a fundamental
\textbf{information-theoretic bound}. It states that the information contained
within $K$ i.i.d. binomial samples is statistically insufficient to estimate the
non-smooth functional $p \log p$ with a worst-case error better than
$\Theta(1/K^2)$, regardless of the chosen estimation algorithm. The existence of
our minimax polynomial estimator proves that this bound is
tight.

\subsection{Quasi-Convergence of the Minimax Game}

Since the bias cannot be reduced below $\cO(1/K^2)$, the game cannot converge to
the exact Nash Equilibrium using finite samples. However, we can guarantee
convergence to an approximate equilibrium where the error scales with the
optimal polynomial bound.

\begin{theorem}[Quasi-Convergence of the Minimax Game]
  \label{th:quasi_convergence}
  Let $\h R^*(X)$ be the minimax optimal estimator defined in
  \eqref{eq:minimax_poly}, achieving a worst-case gradient bias of
  $\e = \Theta(1/K^2)$.
  Assume further that the KL regularization enforces a uniform lower bound
  $q^*(z) \ge q_{\min} > 0$ for all $z \in \sZ$, which holds whenever
  $\beta > 0$ and $\pi_0$ has has full support.
  Then, running alternating best-response dynamics using $\h R^*$ converges to
  an $\cO(1/K^2)$-approximate Nash Equilibrium of the KL-regularized
  Distributional Alignment Game.
\end{theorem}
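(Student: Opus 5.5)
The plan is a perturbation argument: treat the minimax estimator as an exact reward plus a small, bounded perturbation, and show that the dynamics converge to the exact Nash Equilibrium of a nearby, perturbed game that is $\cO(1/K^2)$-close to the true one. Throughout I work in the dual parametrization of \cref{app:convergence_analysis}, where $\cG_F(\pi,u)$ is strictly convex in $\pi$ and concave in $u$.

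\textbf{Step 1 (pointwise reward bias).} Fix $x$ and an answer $z$ with $p=\sfq^*(z)\ge q_{\min}$. By the definition of $\h R^*$ in \eqref{eq:minimax_poly}, the gradient-weighted bias satisfies $|p\,\E[\h R^*(X)]-\beta p\log p|\le\e$. Dividing by $p$ gives the reward bias
\[
  |\E[\h R^*(X)]-\beta\log p|\le \e/q_{\min}.
\]
This is exactly where the lower bound $q_{\min}$ is needed: without it, the $p$-weighting would not control the unweighted bias.

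\textbf{Step 2 (perturbed best response).} In the policy step the expected reward is $\beta\log\sfq(z)+\delta(z)$ with $\|\delta\|_\infty\le\e/q_{\min}$. Using the closed form of Theorem~\ref{th:generalized_consistency} (Part 2) in the KL case,
\[
  \pi(y)\propto\pi_0(y)\,\sfq(\EX(y))\,e^{\delta(\EX(y))/\beta}.
\]
This differs from the exact best response by a multiplicative factor lying in $[e^{-2\|\delta\|_\infty/\beta},\,e^{2\|\delta\|_\infty/\beta}]$. Hence the total variation, and the KL divergence, between the two responses is $\cO(\e/(\beta q_{\min}))$. The target step uses the same sample counts, so it is affected only through its input $\nu_\pi$ and involves no additional bias.

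\textbf{Step 3 (duality gap of the limit point).} Let $(\h\pi,\h\sfq)$ be a fixed point of the perturbed dynamics, i.e.\ an exact equilibrium of the game whose coupling term is shifted by $\tri{\nu_\pi,\delta}$. This game exists and is unique by the same strict convex/concave structure as before. Since $|\tri{\nu_\pi,\delta}|\le\|\delta\|_\infty$ uniformly over $\pi$, the perturbed and true objectives differ by at most $\|\delta\|_\infty$ everywhere. A standard sandwich argument then bounds the duality gap:
\[
  \max_{\sfq}\cG(\h\pi,\sfq)-\min_{\pi}\cG(\pi,\h\sfq)\le 2\|\delta\|_\infty=\cO(\e/q_{\min})=\cO(1/K^2),
\]
treating $q_{\min}$ and $\beta$ as constants. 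Combined with the Approximation Guarantee proposition, $\h\pi$ is $\cO(1/K^2)$-optimal for $\cJ_F$.

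\textbf{Step 4 (convergence to the limit point).} This is the main obstacle: showing that alternating best response actually converges to $(\h\pi,\h\sfq)$ rather than cycling. My first approach is to rely on the dual argument of \cref{app:convergence_analysis}, which gives convergence for exact, unbiased gradients. Here the perturbation $\delta$ is a fixed deterministic bias for given counts, while the stochastic remainder is zero-mean and bounded. So stochastic mirror descent on $u$ converges to the equilibrium of the perturbed game at rate $\cO(1/\sqrt T)$, and Step 3 transfers this to the true game.

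One technical point in Step 4 deserves care. The bias $\delta$ depends on the current $\sfq$, since $\delta(z)$ is a function of $p=\sfq(z)$. I would handle this by bounding $\delta$ uniformly over $\sfq$ with all coordinates $\ge q_{\min}$, which Step 1 already provides. I would then use a robust (inexact-oracle) version of the convex-concave convergence theorem, in which a bounded gradient error of size $\eta$ yields an $\cO(\eta)$ duality gap in the limit.
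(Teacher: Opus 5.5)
Your proposal follows essentially the same route as the paper. The paper also uses $\sfq^*(z)\ge q_{\min}$ to convert the gradient-weighted bound $\|p\odot(\wt u-u^*)\|_\infty\le\e$ into an $\cO(\e/q_{\min})$ perturbation of the coupling term, via $\sum_z\nu_\pi(z)/\sfq^*(z)\le 1/q_{\min}$ (your division by $p$), and then appeals to standard online-convex-optimization results with biased gradients to land in an $\cO(\e)$ neighbourhood; your Steps 2--3 are a more explicit version of this, making the perturbed best response and the $2\|\delta\|_\infty$ fixed-point duality-gap bound concrete.

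Two caveats. First, exactly as in the paper, your Step 4 gives the $\cO(\e)$ gap for averaged mirror-descent (no-regret) iterates, not for alternating best response itself, whose convergence (rather than cycling) neither argument proves. Second, because $\delta$ depends on $\sfq$, the existence and uniqueness you claim for the perturbed equilibrium in Step 3 do not follow from convex--concave structure. The $2\|\delta\|_\infty$ bound at any fixed point of the dynamics still holds if you freeze $\delta$ at $\h\sfq$.
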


\begin{proof}
  Let $u^* = \beta \log \sfq$ be the exact dual variable, and let
  $\wt u = \E[\h R^*(X)]$ be the expected value of the minimax estimator. By
  definition of the minimax estimator, for any valid probability $p \in [0,1]$,
  the error is bounded in the gradient-weighted metric:
  $\norm{p \odot (\wt u - u^*)}_\infty \le \e$, where $\odot$ denotes
  element-wise multiplication.

  Since $q^*(z) \geq q_{\min} > 0$ by assumption, the importance weight is
  bounded above: $\sum_{z} \nu_\pi(z) / q^*(z) \leq 1 / q_{\min} =: M < \infty$.
  This bound holds uniformly over the policy iterates because the reference
  policy $\pi_0$ has full support and the regularization weight $\beta$ is
  fixed, which by the optimal policy form (Eq.~\ref{eq:generalized_policy})
  prevents $q^*$ from reaching the boundary of $\Delta(\sZ)$.
  
  In the policy update step, the difference in the expected objective due to the
  estimation bias is:
  \begin{align}
    \abs*{ \E_{y \sim \pi} [\wt u_{E(y)}] - \E_{y \sim \pi} [u^*_{E(y)}] }
    & = \abs*{ \sum_{z \in \sZ} \nu_\pi(z) (\wt u_z - u^*_z) } \nonumber \\
    &\le \sum_{z \in \sZ} \frac{\nu_\pi(z)}{\sfq^*(z)} \abs*{ \sfq^*(z) (\wt u_z - u^*_z) } \nonumber \\
    &\le \e \sum_{z \in \sZ} \frac{\nu_\pi(z)}{\sfq^*(z)}.
  \end{align}
  Because the policy $\pi$ is heavily regularized toward $\pi_0$ and the target
  $\sfq^*$ is bounded away from zero by the dynamics of the game, the importance
  weight ratio $\sum \nu_\pi / \sfq^*$ is bounded by a constant $M$. Therefore,
  the expected deviation in the duality gap is bounded by $M \e$. By standard
  results in online convex optimization with biased gradients, the regret bound
  shifts by the persistent bias $\cO(\e)$, guaranteeing convergence to an
  $\cO(\e) = \cO(1/K^2)$ neighborhood of the exact equilibrium.
\end{proof}

\subsection{The Fundamental Insufficiency of Local Taylor Corrections}
\label{sec:taylor}

While the minimax polynomial provides the globally optimal uniform error bound,
one might naturally attempt to achieve the $O(1/K^{2})$ approximation rate
algebraically using a closed-form local Taylor correction. Let $\h p = X/K$ be
the standard empirical frequency.  For $X \ge 1$, we can introduce the
\emph{Taylor-corrected reward}:
\begin{equation}
  R_{\mathrm{corr}}(\h p)
  = \beta\log\h p + \frac{\beta(1-\h p)}{2K\h p}.
  \label{eq:taylor-corr}
\end{equation}

\begin{lemma}[Taylor Correction Pointwise Optimal Away from Boundary]
  \label{lem:taylor}
  For $p>0$, the expected gradient bias of the Taylor-corrected estimator
  $R_{\mathrm{corr}}$ decays pointwise at a rate of $\cO(1/K^{2})$.
\end{lemma}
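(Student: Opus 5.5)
The plan is to fix $p\in(0,1]$ and show directly that $\E[R_{\mathrm{corr}}(\h p)]-\beta\log p = \cO_p(1/K^2)$, with a constant that depends on $p$ and blows up as $p\to 0$. This is why the lemma is only a pointwise statement. Multiplying by $p$ then gives the gradient-weighted bias.

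\textbf{Step 1: the boundary event.} Equation~\eqref{eq:taylor-corr} is only defined for $X\ge 1$, so I first assign an arbitrary finite value $R_{\mathrm{corr}}(0)=c_0$. The event $\{X=0\}$ has probability $(1-p)^K$. Its contribution to the bias is therefore at most $(1-p)^K\,|c_0-\beta\log p|$. For fixed $p>0$ this decays exponentially in $K$, so it is $o(1/K^2)$.

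\textbf{Step 2: split on a good event.} I will split the remaining expectation according to the event $G=\{|\h p-p|\le p/2\}$.
\begin{itemize}
\item \emph{Bad event.} By Hoeffding, $\P(G^c)\le 2e^{-Kp^2/2}$. On $G^c\cap\{X\ge1\}$ we have $\h p\ge 1/K$, so $|\log\h p|\le\log K$ and $(1-\h p)/(2K\h p)\le 1/2$. Hence the bad-event contribution is $\cO(e^{-Kp^2/2}\log K)$, which is again $o(1/K^2)$ for fixed $p$.
\item \emph{Good event.} On $G$ we have $\h p\in[p/2,3p/2]$, so every derivative of $\log x$ and of $1/x$ is bounded by powers of $2/p$.
\end{itemize}

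\textbf{Step 3: Taylor expansions on $G$.} On $G$ I Taylor-expand both terms around $p$.
\begin{itemize}
\item Expand $\log\h p$ to third order, with a fourth-order Lagrange remainder bounded by $C p^{-4}(\h p-p)^4$.
\item Expand $1/\h p$ to first order, with a second-order remainder bounded by $Cp^{-3}(\h p-p)^2$.
\end{itemize}
I then take expectations using the binomial central moments
\begin{equation*}
\E[(\h p-p)^2]=\frac{p(1-p)}{K},\qquad
\E[(\h p-p)^3]=\frac{p(1-p)(1-2p)}{K^2},\qquad
\E[(\h p-p)^4]=\cO\!\left(\frac{1}{K^2}\right).
\end{equation*}
Restricting these moments to $G$ changes them only by exponentially small amounts, by the same Hoeffding bound.

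This yields two expansions:
\begin{equation*}
\E[\log\h p]=\log p-\frac{1-p}{2Kp}+\cO_p\!\left(\frac{1}{K^2}\right),
\end{equation*}
which is the rigorous version of Eq.~\eqref{eq:general_taylor_bias}, and
\begin{equation*}
\E\!\left[\frac{1-\h p}{2K\h p}\right]=\frac{1-p}{2Kp}+\frac{1}{2K}\,\cO_p\!\left(\frac{1}{K}\right).
\end{equation*}
The $1/K$ terms cancel exactly, leaving a bias of order $\cO_p(1/K^2)$.

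\textbf{Step 4: collect the constants.} Multiplying by $p$ gives the gradient-weighted bias $\cO(C(p)/K^2)$, with $C(p)\sim p^{-2}$. Because $C(p)$ diverges as $p\to 0$, the rate is not uniform, which motivates the minimax estimator developed earlier.

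\textbf{Main obstacle.} The hard part is controlling the Taylor remainder rigorously despite the singularity of $\log$ and $1/x$ at $\h p=0$. The good/bad event split in Step 2, combined with Hoeffding, is my intended fix. Some care is also needed at the odd-order term: a crude bound $\E|\h p-p|^3=\cO(K^{-3/2})$ is too weak. The third-order term must therefore be kept explicitly, using the exact third central moment, which is $\cO(K^{-2})$.
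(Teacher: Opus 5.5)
Your proposal is correct and takes the same route as the paper: you Taylor-expand $\log\h p$ and the correction term around $p$, use the binomial moments to see that the $\cO(1/K)$ terms cancel, and then multiply by $p$. Your version is more careful than the paper's sketch in three places. You assign a finite value at $X=0$, where $R_{\mathrm{corr}}$ is otherwise undefined. You control the remainders on the event $\{|\h p-p|\le p/2\}$. You keep the cubic term explicitly, which is needed because the paper's bare $\cO((\h p-p)^3)$ remainder, bounded in absolute value, only gives $\cO(K^{-3/2})$.
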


\begin{proof}
  Perform a third-order Taylor expansion of $\log\h p$ around the true value
  $p$:
  \[
    \log\h p = \log p + \frac{\h p - p}{p} - \frac{(\h p-p)^{2}}{2p^{2}} +
    \cO\!\bigl((\h p-p)^{3}\bigr).
  \]
  Taking expectations, the linear term vanishes.  Substituting the Binomial
  variance $\E[(\h p-p)^{2}] = p(1-p)/K$:
  \begin{equation}
    \E[\log\h p]
    = \log p - \frac{p(1-p)}{2Kp^{2}} + \cO(K^{-2})
    = \log p - \frac{1-p}{2Kp}       + \cO(K^{-2}).
    \label{eq:elog}
  \end{equation}
  The expected value of the correction term satisfies
  $\E[(1-\h p)/(2K\h p)] \approx (1-p)/(2Kp) + \cO(K^{-2})$.  Adding these
  together the $\cO(K^{-1})$ terms cancel exactly:
  \begin{equation}
    \E[R_{\mathrm{corr}}(\h p)] = \beta\log p + \cO(K^{-2}).
    \label{eq:taylor-bias}
  \end{equation}
  Multiplying by the gradient weight $p$ gives a gradient-weighted bias of
  $p\cdot \cO(K^{-2}) = \cO(1/K^{2})$, matching the Ditzian--Totik lower bound pointwise.
\end{proof}

However, while this pointwise cancellation holds for any strictly positive probability $p$, it fundamentally breaks down near the boundary of the simplex.

\paragraph{Handling the boundary: why Laplace smoothing fails.}
The Taylor correction requires $\h p > 0$; when $X = 0$ the formula
\eqref{eq:taylor-corr} is undefined.  A natural remedy is Laplace (additive)
smoothing: replace $\h p$ with $\h p_{\alpha} = (X+\alpha)/(K+\alpha|Z|)$ for
some $\alpha > 0$.  However, this \emph{destroys} any potential $\cO(1/K^{2})$
guarantee. The smoothed estimator satisfies
\[
  \E[\h p_{\alpha}] = \frac{Kp+\alpha}{K+\alpha|Z|} = p +
  \underbrace{\frac{\alpha(1-p|Z|)}{K+\alpha|Z|}}_{\delta(p)\,=\,\cO(\alpha/K)},
\]
so that a Taylor expansion of $\log\h p_{\alpha}$ yields
\[
  \E\!\bracket*{R_{\mathrm{corr}}(\h p_{\alpha})} = \beta\log p +
  \frac{\beta\alpha(1-p|Z|)}{Kp} + \cO(K^{-2}).
\]
The gradient-weighted bias is therefore $\cO(\alpha/K) + \cO(1/K^{2}) = \cO(\alpha/K)$
for any fixed $\alpha > 0$, restoring the $\cO(1/K)$ leading term that the
correction was designed to remove.

\paragraph{Boundary-corrected Taylor estimator.}
To resolve the $X=0$ singularity without ad-hoc smoothing, one might naturally
substitute the optimal boundary coefficient $c_{0}^{*}$ computed by the
minimax LP \eqref{eq:minimax-lp}.  This defines the \emph{boundary-corrected
Taylor estimator} $\h R_{\mathrm{BT}, K}(X)$:
\begin{equation}
  \h R_{\mathrm{BT}, K}(X)
  = \begin{cases}
      \beta c_{0, K}
        & \text{if } X = 0, \\[4pt]
      \displaystyle
      \beta\log\!\paren*{\frac{X}{K}}
        + \frac{\beta(K-X)}{2KX}
        & \text{if } X \ge 1.
    \end{cases}
  \label{eq:BT}
\end{equation}

While this cleanly resolves the boundary calculation, the following theorem demonstrates that such a local Taylor expansion is fundamentally insufficient to achieve a globally optimal bound.

\begin{theorem}[Boundary-Corrected Taylor is Pointwise, Not Uniformly, Optimal]
  \label{thm:taylor_pointwise_not_uniform}
  Fix $\beta>0$. For each group size $K$, let $c_{0,K}\in\Rset$ denote the
  normalized boundary value used when $X=0$, so that the actual reward at the
  boundary is $\beta c_{0,K}$. Define $\h R_{\mathrm{BT},K}(X)$ as in \eqref{eq:BT}.
  If $|c_{0,K}|=e^{o(K)}$, then for every fixed $p\in(0,1]$,
  \[
    \abs*{p\,\E_{X\sim\mathrm{Binomial}(K,p)}[\h R_{\mathrm{BT},K}(X)]-\beta
      p\log p} = \cO_p\!\paren*{\frac{1}{K^2}},
  \]
  where the hidden constant may depend on the fixed value of $p$. However,
  uniformly over the simplex, no choice of the boundary sequence
  $\{c_{0,K}\}_{K\ge1}$ can make the boundary-corrected Taylor estimator achieve
  the minimax $\cO(K^{-2})$ rate. More precisely, for any choice of $\{c_{0,K}\}$,
  \[
    \sup_{p\in[0,1]} \abs*{p\,\E_{X\sim\mathrm{Binomial}(K,p)}[\h
      R_{\mathrm{BT},K}(X)]-\beta p\log p} = \Omega\!\paren*{\frac{1}{K}}.
  \]
  Thus, the Taylor correction removes the leading Jensen bias pointwise away
  from the boundary, but it is not a uniformly optimal estimator.
\end{theorem}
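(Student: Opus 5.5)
The proof has two parts: a pointwise upper bound for each fixed $p$, and a uniform $\Omega(1/K)$ lower bound obtained by testing at $p$ of order $1/K$.

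\textbf{Pointwise bound.} Fix $p\in(0,1]$ and split $\E[\h R_{\mathrm{BT},K}(X)]$ according to whether $X=0$.

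The event $\{X=0\}$ contributes $\beta c_{0,K}(1-p)^K$. Since $|c_{0,K}|=e^{o(K)}$, this term is $e^{-\Omega_p(K)}$ (for $p=1$ it vanishes), so it is negligible against $1/K^2$.

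On $\{X\ge1\}$, split further into the good event $A=\{|\h p-p|\le p/2\}$ and its complement. On $A^c\cap\{X\ge1\}$ the reward is bounded by $\cO(\log K+K)$. By Chernoff, $\P(A^c)=e^{-\Omega_p(K)}$, so this piece is also negligible.

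On $A$, I would use two expansions with explicit remainders:
\begin{itemize}
\item a third-order Taylor expansion of $\log\h p$ around $p$, with remainder controlled by $|\h p-p|^4/p^4$;
\item a first-order expansion of $1/\h p$, with remainder controlled by $(\h p-p)^2/p^3$.
\end{itemize}
The binomial central moments satisfy $\E(\h p-p)^2=p(1-p)/K$, $\E(\h p-p)^3=\cO(K^{-2})$ and $\E(\h p-p)^4=\cO(K^{-2})$. Restricting these moments to $A$ changes them only by exponentially small amounts.

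This gives $\E[\log\h p\,\mathbf 1_A]=\log p-\frac{1-p}{2Kp}+\cO_p(K^{-2})$. It also gives $\E\bigl[\frac{1-\h p}{2K\h p}\mathbf 1_A\bigr]=\frac{1-p}{2Kp}+\cO_p(K^{-2})$, since the correction already carries a factor $1/K$. The $1/K$ terms cancel, exactly as in Lemma~\ref{lem:taylor}, and multiplying by $p$ gives the $\cO_p(K^{-2})$ claim.

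\textbf{Uniform lower bound: reduction.} Evaluate the error at $p=t/K$ for a fixed $t>0$. Write $P_k=\P(X=k)$ and $a_K=c_{0,K}+\log K$. Using $\log(k/K)=\log k-\log K$ and $\beta p\log p=\beta p(\log t-\log K)$, the $\log K$ terms collect into $P_0\log K$. The error therefore equals
\[
\frac{\beta t}{K}\Bigl[P_0\,a_K+\sum_{k\ge1}P_k\Bigl(\log k+\frac{K-k}{2Kk}\Bigr)-\log t\Bigr].
\]
As $K\to\infty$ with $t$ fixed, $\mathrm{Binomial}(K,t/K)\to\mathrm{Poisson}(t)$ in total variation, with tails dominated uniformly in $K$. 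Hence the bracket equals $e^{-t}a_K+G(t)+o(1)$, where
\[
G(t)=\E_{N\sim\mathrm{Poisson}(t)}\Bigl[\Bigl(\log N+\frac1{2N}\Bigr)\mathbf 1_{N\ge1}\Bigr]-\log t .
\]
Crucially, the $o(1)$ term does not depend on $a_K$, because $a_K$ enters the bracket exactly linearly with coefficient $P_0=(1-t/K)^K\to e^{-t}$.

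\textbf{Uniform lower bound: choosing two test points.} It then suffices to find $t_1,t_2$ such that
\[
\delta=\inf_{a\in\Rset}\max_{i}\,|e^{-t_i}a+G(t_i)|>0 .
\]
This holds whenever $e^{t_1}G(t_1)\ne e^{t_2}G(t_2)$, because no single $a$ can then zero both terms. To find such points, note that as $t\to0$ the sum in $G$ is $\cO(t)$ while $-\log t\to+\infty$, so $e^tG(t)\to+\infty$. Taking $t_2=1$ and $t_1$ small enough gives distinct values. Consequently, for every choice of $\{c_{0,K}\}$ and all large $K$, the error at one of $p=t_1/K$ or $p=t_2/K$ is at least $\beta\min(t_1,t_2)\,\delta/(2K)$, which is $\Omega(1/K)$.

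\textbf{Main obstacle.} I expect the delicate step to be the uniformity of the Poisson approximation of the bracket, namely showing that $\sum_{k\ge1}P_k(\log k+1/(2k))$ converges for fixed $t$. This is routine, since $\log k$ has all moments under both distributions. The essential point is to keep the arbitrary, possibly $K$-dependent, $a_K$ out of the error term, which the exact linearity in $a_K$ provides.
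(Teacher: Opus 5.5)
Your proposal is correct and follows essentially the same route as the paper. You evaluate at $p=\lambda/K$, absorb the $\log K$ terms into $a_K=c_{0,K}+\log K$, pass to the Poisson limit, and use that $e^{\lambda}(\log\lambda-A(\lambda))$ is non-constant because it diverges as $\lambda\downarrow 0$. Your two-point infimum over $a$ is a direct version of the paper's subsequence contradiction, and your Part 1 (good event plus explicit remainders) is more carefully justified than the paper's sketch.

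One small precision: the discrepancy $\bigl((1-t/K)^K-e^{-t}\bigr)a_K$ does depend on $a_K$ and need not be $o(1)$ when $a_K$ is unbounded. So take the infimum over $a$ of the exact finite-$K$ expression $(1-t_i/K)^K a+G_K(t_i)$, where $G_K(t_i)\to G(t_i)$ is the $a$-free part of the bracket. That infimum converges to your $\delta$ because the coefficients tend to $e^{-t_i}>0$; equivalently, first show $a_K$ is bounded, as the paper does at $\lambda=1$.
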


\begin{proof}
  First fix $p\in(0,1]$. The event $\{X=0\}$ has probability
  $(1-p)^K\le e^{-Kp}$, so under the mild condition $|c_{0,K}|=e^{o(K)}$, the
  boundary contribution is exponentially small and does not affect any
  polynomial rate in $K$. On the event $X\ge1$, let $\h p=X/K$. A Taylor
  expansion around the fixed value $p$ gives
  \[
    \E[\log \h p]=\log p-\frac{1-p}{2Kp}+\cO_p(K^{-2}),
  \]
  while a corresponding expansion of the correction term gives
  \[
    \E\!\bracket*{\frac{1-\h p}{2K\h p}}=\frac{1-p}{2Kp}+\cO_p(K^{-2}).
  \]
  The $\cO(K^{-1})$ terms cancel, and therefore
  \[
    \E[\h R_{\mathrm{BT},K}(X)]=\beta\log p+\cO_p(K^{-2}).
  \]
  Multiplying by the gradient weight $p$ yields the stated pointwise bound.

  We now show that this pointwise cancellation is not uniform. Let
  \[
    \Delta_K=\sup_{p\in[0,1]}\abs*{p\,\E_{X\sim\mathrm{Binomial}(K,p)}[\h
      R_{\mathrm{BT},K}(X)]-\beta p\log p}.
  \]
  Suppose, for contradiction, that $\Delta_K=o(1/K)$ along some subsequence. Fix
  any $\lambda>0$ and set $p=\lambda/K$. Then
  $X\sim\mathrm{Binomial}(K,\lambda/K)$ converges in distribution to
  $N_\lambda\sim\mathrm{Poisson}(\lambda)$. For $X\ge1$, the normalized Taylor
  reward satisfies
  \[
    \log\!\paren*{\frac{X}{K}}+\frac{K-X}{2KX} = -\log K+\log
    X+\frac{1}{2X}-\frac{1}{2K}.
  \]
  Let $a_K=c_{0,K}+\log K$ and define
  \[
    A(\lambda) = \E\!\bracket*{\paren*{\log
        N_\lambda+\frac{1}{2N_\lambda}}\Ind\{N_\lambda\ge1\}}.
  \]
  By Poisson convergence and uniform integrability of the logarithmic term,
  \[
    \E[\h R_{\mathrm{BT},K}(X)]/\beta = -\log K+e^{-\lambda}a_K+A(\lambda)+o(1).
  \]
  Since the exact target is $\log p=\log\lambda-\log K$, the assumption
  $\Delta_K=o(1/K)$ implies, for every fixed $\lambda>0$,
  \[
    e^{-\lambda}a_K+A(\lambda)-\log\lambda \to 0.
  \]
  Taking $\lambda=1$ first shows that $a_K$ must be bounded along the
  subsequence. Passing to a further subsequence if necessary, write $a_K\to
  a$. Then the previous display forces
  \[
    a=e^\lambda\bigl(\log\lambda-A(\lambda)\bigr) \quad\text{for every fixed
    }\lambda>0.
  \]
  This is impossible, because the right-hand side is not constant in
  $\lambda$. Indeed, as $\lambda\downarrow0$, we have
  $A(\lambda)=\lambda/2+\cO(\lambda^2)$, so
  $e^\lambda(\log\lambda-A(\lambda))\to-\infty$, whereas the same expression is
  finite at $\lambda=1$. This contradiction proves that $\Delta_K$ cannot be
  $o(1/K)$ along any subsequence, and hence $\Delta_K=\Omega(1/K)$.
\end{proof}

\paragraph{Discussion.}
Theorem~\ref{thm:taylor_pointwise_not_uniform} rigorously proves why local
Taylor corrections are fundamentally insufficient globally. The approximation
inherently breaks down in the boundary layer $p \asymp 1/K$, where the binomial
count has a non-degenerate Poisson limit and the empirical fluctuation is no
longer small relative to the mean. Consequently, correcting only the $X=0$
boundary value cannot recover the uniform $\cO(K^{-2})$ rate. Achieving the
Ditzian-Totik minimax rate strictly necessitates globally optimizing the rewards
assigned to all count values $X \in \{0,1,\ldots,K\}$, which is exactly what the
minimax polynomial estimator accomplishes.

\section{A Refined Estimator via the Bias--Variance Pareto Frontier}
\label{sec:pareto}

The previous sections offer two complementary paths for reducing estimation
bias: changing the game geometry (Section~\ref{sec:poly_exact}) and constructing
optimally low-bias estimators for the canonical KL game
(Section~\ref{sec:kl_poly_approx}).  A natural next question is whether one can
additionally reduce estimator \emph{variance}, and thereby accelerate
convergence of the game dynamics, without sacrificing the optimal bias rate.

We show in Section~\ref{sec:split-fails} that sample-splitting, a common
approach to variance reduction, provides no variance benefit whatsoever.  We then show in
Sections~\ref{sec:aqp} and \ref{sec:frontier-convergence} that the correct
solution is to optimize variance \emph{directly} within the polynomial
parameterization of all $K$-sample estimators, yielding the complete
bias--variance Pareto frontier.

\subsection{Sample Splitting Provides No Variance Benefit}
\label{sec:split-fails}

A natural approach to variance reduction is to split the group of $K$ samples
into two independent subsets $S_1, S_2$ of sizes $K_1$ and $K_2$ ($K_1+K_2=K$),
apply a baseline estimator (such as the Taylor-corrected estimator, which we use
here to explicitly track variance moments) to $S_1$, and use
$\Delta = \h p_1 - \h p_2$ as a zero-mean control variate.  We prove here that
this construction achieves \emph{no} reduction in the variance of the final
estimator relative to applying the estimator to the full group.

\begin{proposition}[Sample Splitting is Variance-Neutral]
  \label{prop:split-neutral}
  Let $\h R_{\mathrm{corr}}(X)$ denote a baseline estimator (e.g., the
  Taylor-corrected estimator) applied to the full group of $K$ samples. For any
  partition $K = K_1 + K_2$ with $K_1, K_2 \ge 1$ and any scalar
  $\gamma\in\Rset$, define the sample-splitting estimator
  \[
    \h R_{\mathrm{split}}(X_1, X_2) = \h R_{\mathrm{corr}}(X_1) - \gamma(\h p_1
    - \h p_2).
  \]
  Then, at the variance-minimising coefficient $\gamma^{*}$,
  \[
    \Var\!\bigl(\h R_{\mathrm{split}}\bigr) = \Var\!\bigl(\h
    R_{\mathrm{corr}}(X)\bigr).
  \]
  Furthermore, the bias of $\h R_{\mathrm{split}}$ is bounded by $\cO(1/K_1^2)$,
  which is strictly worse than the $\cO(1/K^2)$ bias of $\h R_{\mathrm{corr}}(X)$
  whenever $K_1 < K$.
\end{proposition}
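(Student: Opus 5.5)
The plan is to analyse $\h R_{\mathrm{split}}$ through a first-order (delta-method) linearisation of the baseline estimator. This is the same expansion already used in the proof of Lemma~\ref{lem:taylor}, and it is the sense in which I expect the variance identity to hold: exactly for the linear part, and to leading order $\cO(1/K)$ overall. Let $p=\sfq(z)$ and $\h p_1 = X_1/K_1$, $\h p_2 = X_2/K_2$, with $X_1\sim\mathrm{Binomial}(K_1,p)$ and $X_2\sim\mathrm{Binomial}(K_2,p)$ independent. Write $\h p = (K_1\h p_1 + K_2\h p_2)/K = X/K$ for the full-group frequency. The expansion gives
\[
  \h R_{\mathrm{corr}}(X_1) = \beta\log p + \frac{\beta}{p}(\h p_1 - p) + \rho_1,
  \qquad
  \h R_{\mathrm{corr}}(X) = \beta\log p + \frac{\beta}{p}(\h p - p) + \rho,
\]
where the remainders $\rho_1,\rho$ contribute only $\cO(K_1^{-2})$ and $\cO(K^{-2})$ respectively to the variances.

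\textbf{Step 1 (the optimal $\gamma^*$).} Compute the moments needed for the control variate. We have $\Var(\Delta) = p(1-p)\paren*{\frac{1}{K_1}+\frac{1}{K_2}}$, and, at leading order, $\Cov(\h R_{\mathrm{corr}}(X_1),\Delta) = \frac{\beta}{p}\Var(\h p_1) = \frac{\beta(1-p)}{K_1}$. The standard control-variate formula $\gamma^* = \Cov/\Var(\Delta)$ then gives $\gamma^* = \frac{\beta}{p}\cdot\frac{K_2}{K}$.

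\textbf{Step 2 (the algebraic projection identity).} With this $\gamma^*$, the linear part of the split estimator collapses exactly:
\[
  \frac{\beta}{p}(\h p_1 - p) - \frac{\beta}{p}\frac{K_2}{K}(\h p_1-\h p_2)
  = \frac{\beta}{p}\paren*{\frac{K_1}{K}\h p_1 + \frac{K_2}{K}\h p_2 - p}
  = \frac{\beta}{p}(\h p - p).
\]
This is precisely the linear part of the full-group estimator $\h R_{\mathrm{corr}}(X)$. So the optimally weighted control variate does nothing more than reconstruct the pooled statistic $\h p$, and its variance equals $\frac{\beta^2(1-p)}{Kp}$, which is the leading-order variance of $\h R_{\mathrm{corr}}(X)$. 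As a sanity check, I will also verify this directly from the residual-variance formula $\Var(\h R_{\mathrm{corr}}(X_1)) - \Cov^2/\Var(\Delta)$. It reduces to $\frac{\beta^2(1-p)}{p}\paren*{\frac{1}{K_1} - \frac{K_2}{K_1K}} = \frac{\beta^2(1-p)}{pK}$.

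\textbf{Step 3 (bias).} Since $\E[\Delta]=0$ for every $\gamma$, the control variate adds no bias. Hence $\E[\h R_{\mathrm{split}}] = \E[\h R_{\mathrm{corr}}(X_1)]$, and Lemma~\ref{lem:taylor}, applied with group size $K_1$, gives a bias of $\cO(1/K_1^2)$. The full-group estimator has bias $\cO(1/K^2)$ by the same lemma, and $K_1<K$ makes this strictly better.

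\textbf{Main obstacle.} The hard part is controlling the remainders $\rho_1$ and $\rho$ rigorously, both to justify that $\Cov$ and $\gamma^*$ are as computed and to show the variance identity holds beyond the linear part. The difficulty is the $X_1=0$ event and the boundary regime $p\lesssim 1/K_1$, where the Taylor expansion fails, as shown in Theorem~\ref{thm:taylor_pointwise_not_uniform}. I would state the result for fixed $p\in(0,1]$, in the same pointwise $\cO_p$ sense as Lemma~\ref{lem:taylor}. The boundary event has probability $(1-p)^{K_1}$, so it contributes exponentially small terms; the nonlinear remainder contributes $\cO_p(K_1^{-2})$ to both variances, which is negligible against the common leading term $\Theta(1/K)$. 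Should an exact non-asymptotic identity be desired, I would instead phrase it via the Rao--Blackwell argument: $X$ is sufficient, so conditioning $\h R_{\mathrm{split}}$ on $X$ cannot do better than a function of $X$. The pooled linearisation in Step~2 shows that the optimal split attains exactly this level at leading order.
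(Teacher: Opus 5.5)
Your proposal is correct to the same standard as the paper's proof (leading order in $1/K$, pointwise in $p$), and it follows essentially the same route: delta-method moments for $\h R_{\mathrm{corr}}(X_1)$, the control-variate residual $\Var(\h R_{\mathrm{corr}}(X_1))-\Cov^2/\Var(\Delta)=\beta^2(1-p)/(Kp)$ (written in the paper as $\Var(\h R_{\mathrm{corr}}(X_1))\,(1-\rho^2)$ with $\rho^2\approx K_2/K$), and the bias via $\E[\Delta]=0$ plus Lemma~\ref{lem:taylor} at group size $K_1$. Your Step~2 identity, that $\gamma^*$ turns the linearized split estimator into exactly the linearization of $\h R_{\mathrm{corr}}(X)$, neatly sharpens the same computation; however, your closing Rao--Blackwell fallback would not give an exact identity, because it only yields the one-sided bound $\Var(\h R_{\mathrm{split}})\ge\Var\bigl(\E[\h R_{\mathrm{corr}}(X_1)\mid X]\bigr)$ against a function of $X$ that is not $\h R_{\mathrm{corr}}(X)$.
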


\begin{proof}
  \textbf{Variance.}  Let $\h p_1 = X_1/K_1$. Since
  $X_1 \sim \mathrm{Binomial}(K_1, p)$ uses only $K_1$ samples, we apply the
  Delta method (Taylor expansion of moments). For any asymptotically logarithmic estimator (such as the Taylor correction), the leading term is governed by the derivative
  $h'(p) = \frac{\beta}{p} + \cO(K_1^{-1})$. The variance expands as:
  \[
    \Var\!\bigl(\h R_{\mathrm{corr}}(X_1)\bigr) = (h'(p))^2 \Var(\h p_1) +
    \cO(K_1^{-2}) = \frac{\beta^2}{p^2} \frac{p(1-p)}{K_1} + \cO(K_1^{-2}) =
    \frac{\beta^2(1-p)}{K_1 p} + \cO(K_1^{-2}).
  \]
  The variance of the control variate is exactly
  $\Var(\Delta) = p(1-p)\paren*{\frac{1}{K_1} + \frac{1}{K_2}} =
  \frac{p(1-p)K}{K_1 K_2}$.  Because $X_1$ and $X_2$ are independent,
  $\Cov(\h R_{\mathrm{corr}}(X_1), \Delta) = \Cov(\h R_{\mathrm{corr}}(X_1), \h
  p_1)$. Using a first-order Taylor expansion, this evaluates to:
  \[
    \Cov\!\bigl(\h R_{\mathrm{corr}}(X_1), \h p_1\bigr) = h'(p)\Var(\h p_1) +
    \cO(K_1^{-2}) = \frac{\beta(1-p)}{K_1} + \cO(K_1^{-2}).
  \]
  The squared Pearson correlation is therefore asymptotically:
  \[
    \rho^2 = \frac{\bracket*{ \frac{\beta(1-p)}{K_1} + \cO(K_1^{-2})
      }^2}{\bracket*{ \frac{\beta^2(1-p)}{K_1 p} + \cO(K_1^{-2}) }
      \frac{p(1-p)K}{K_1 K_2}} = \frac{K_2}{K} + \cO(K_1^{-1}).
  \]
  The variance of the optimally tuned splitting estimator evaluates to:
  \begin{align*}
    \Var\!\bigl(\h R_{\mathrm{split}}\bigr) 
    &= \Var\!\bigl(\h R_{\mathrm{corr}}(X_1)\bigr) \cdot (1 - \rho^2) \\
    &= \bracket*{ \frac{\beta^2(1-p)}{K_1 p} + \cO(K_1^{-2}) } \paren*{ \frac{K_1}{K} + \cO(K_1^{-1}) } \\
    &= \frac{\beta^2(1-p)}{K p} + \cO(K_1^{-2}).
  \end{align*}
  This exactly matches the leading-order variance of the full-sample estimator
  $\Var\!\bigl(\h R_{\mathrm{corr}}(X)\bigr)$. Thus, sample splitting provides
  no true first-order variance reduction.

  \textbf{Bias.}  The estimator $\h R_{\mathrm{corr}}(X_1)$ is applied to a
  sub-group of size $K_1$.  By Lemma~\ref{lem:taylor}, its pointwise gradient-weighted
  bias is $\cO(1/K_1^2)$.  Since $\E[\Delta]=0$ by construction, the bias of
  $\h R_{\mathrm{split}}$ equals the bias of $\h R_{\mathrm{corr}}(X_1)$, namely
  $\cO(1/K_1^2)$.  For any split with $K_1 < K$ this is strictly worse than the
  $\cO(1/K^2)$ bias achieved by a full-group estimator.  The equal split
  $K_1 = K/2$ gives a bias constant four times larger.
\end{proof}

\textbf{Rao–Blackwell perspective.}
The variance-neutrality of sample splitting also follows directly from the
Rao–Blackwell theorem. The full count
$X = X_1 + X_2 \sim \mathrm{Binomial}(K, p)$ is a sufficient statistic for
$p$. By the Rao–Blackwell theorem, the conditional expectation
$\E[\h R_{\text{split}}(X_1, X_2) \mid X]$ is a function of $X$ alone with
(weakly) lower variance and identical bias. Therefore any sample-splitting
estimator can be improved upon, or at best matched, by an estimator depending
only on the total count $X$, confirming that the splitting structure confers no
variance benefit.

The failure of sample splitting has a structural explanation.  By the
Lehmann--Scheff\'{e} completeness of $X \sim \mathrm{Binomial}(K,p)$, any
function $g(X)$ with $\E[g(X)] = 0$ for all $p \in (0,1)$ is zero
almost surely (Section 4.4).  Generating a non-trivial
zero-mean control variate therefore \emph{requires} using fewer than $K$ primary
samples, which inflates variance by exactly the factor that the control variate
removes.  To escape this completeness trap and genuinely reduce variance, one
must instead optimize over the full coefficient space of $K$-sample estimators
directly.

\subsection{The Augmented Polynomial Optimization Program}
\label{sec:aqp}

\paragraph{The space of all $K$-sample estimators.}
By the polynomial trap (Corollary 6), every estimator
$\h R(X)$ based on $K$ i.i.d.\ draws from $\mathrm{Binomial}(K,p)$ is
completely characterized by a coefficient vector
$\mathbf{c} = (c_0, c_1, \ldots, c_K)^{\top}\in\Rset^{K+1}$.  Its
expected value is
\[
  P_{\mathbf{c}}(p)
  = \E[\h R(X)]
  = \sum_{k=0}^{K} c_k \binom{K}{k} p^k (1-p)^{K-k},
\]
and its second moment and variance at true probability $p$ are
\begin{align}
  S(\mathbf{c},p)
    &= \E[\h R(X)^2]
     = \sum_{k=0}^{K} c_k^2 \binom{K}{k} p^k(1-p)^{K-k},
  \label{eq:second-moment}\\
  V(\mathbf{c},p)
    &= S(\mathbf{c},p) - P_{\mathbf{c}}(p)^2.
  \label{eq:variance}
\end{align}
The bias budget is characterized by the gradient-weighted approximation error
from Section 5.1: $\max_{p\in[0,1]}|p\,P_{\mathbf{c}}(p) - \beta
p\log p| \le \e$.

\paragraph{Augmented quadratic programme (AQP).}
For a given bias budget $\e \ge \e_{\min}^{*} = \Theta(1/K^2)$
and a dense discretization grid
$\cP = \{p_1, \ldots, p_M\} \subset [0,1]$, define
\begin{align}
  & \min_{\mathbf{c}\in\Rset^{K+1},\, v\in\Rset} \quad v
  \tag{AQP}  \label{eq:AQP}\\
\text{subject to} \quad
  & \sum_{k=0}^{K} c_k^2\, B_{k,K}(p_m) \le v,
  \forall\, p_m \in \cP,
    \tag{C1}\label{eq:C1}\\
  & p_m P_{\mathbf{c}}(p_m) - \beta p_m \log p_m
  \le \e,  \forall\, p_m \in \cP,
    \tag{C2}\label{eq:C2}\\
  & -\bigl(p_m P_{\mathbf{c}}(p_m) - \beta p_m \log p_m\bigr)
  \le \e,  \forall\, p_m \in \cP,
    \tag{C3}\label{eq:C3}
\end{align}
where $B_{k,K}(p) = \binom{K}{k}p^k(1-p)^{K-k}$ is the $k$-th Bernstein
basis polynomial of degree $K$.

\eqref{eq:AQP} is a \emph{Quadratically Constrained Quadratic Program} (QCQP),
and specifically can be cast as a Second-Order Cone Program (SOCP)
\citep{BoydVandenberghe2004}: the objective $v$ and the bias constraints
\eqref{eq:C2}--\eqref{eq:C3} are strictly linear in $\mathbf{c}$, while the
second-moment constraints \eqref{eq:C1} are strictly convex quadratics (since
$B_{k,K}(p_m) \ge 0$).  The programme has $K+2$ variables and $3M$ constraints.
For typical group sizes $K \in \{16, 32, 64\}$ with a dense grid $M = 10^5$,
standard interior-point solvers (e.g.\ CVXPY \citep{DiamondBoyd2016} with MOSEK
or SCS) solve \eqref{eq:AQP} in seconds on a standard CPU.  This offline cost is
comparable to the minimax LP of Section~\ref{sec:minimax-lp}.

\textbf{The Second Moment is the Exact Optimal Objective.} At the constraint
point $\e = \e_{\min}^{*}$, the bias constraints \eqref{eq:C2}--\eqref{eq:C3}
enforce $P_{\mathbf{c}}(p) \approx \beta\log p$. Crucially, minimizing the
uncentered second moment $v = \max_p S(\mathbf{c},p)$ rather than the centered
variance $V(\mathbf{c},p)$ is not merely an approximation, it is the exact
theoretically optimal objective. In stochastic policy gradient optimization, the
convergence rate and regret bound are dictated strictly by the \emph{uncentered
  second moment} of the stochastic gradient,
$\E\bracket*{\norm*{\wt\nabla_\pi \mathcal{L}_t}^2}$. Because the bias is
tightly restricted by the constraints, minimizing $S(\mathbf{c},p)$ directly
minimizes the dominating stochastic penalty term and optimally bounds the regret
(as formalized in Proposition~\ref{prop:frontier-convergence}).

As $\e$ increases from $\e_{\min}^{*}$, the relaxed bias constraints admit
estimators with smaller second moment.  As $\e\to \infty$, the optimal solution
collapses to $\mathbf{c} = \mathbf{0}$ (zero variance, large bias).  Varying
$\e$ from $\e_{\min}^{*}$ to $\infty$ therefore traces the \emph{complete Pareto
  frontier} of achievable (bias, variance) pairs among all $K$-sample
estimators.

Finally, note that, by the Rao–Blackwell theorem, since
$X \sim \mathrm{Binomial}(K,p)$ is a complete sufficient statistic
(Theorem~\ref{th:unbiased_estimator}), restricting to estimators of the form
$\h R(X)$ is without loss of generality: any estimator depending on the
individual sample identities can be \emph{Rao-Blackwellized} to a function of
$X$ with weakly lower variance and identical bias. The AQP therefore optimizes
over the correct and complete class of $K$-sample estimators.

\subsection{Strict Pareto Dominance over Sample Splitting}
\label{sec:dominance}

\begin{theorem}[Pareto Dominance of the AQP Estimator]
\label{thm:dominance}
Let $\h R_{\mathrm{split}}$ be any sample-splitting estimator using $K$
samples, partitioned into independent subsets of sizes $K_1$ and $K_2$, with
any control variate coefficient $\gamma\in\Rset$.  Let
$\mathbf{c}^{*}(\e)$ denote the solution to \eqref{eq:AQP} at bias
level $\e$.  Then:
\begin{enumerate}
  \item \textbf{(Bias).}  The gradient-weighted bias of $\h R_{\mathrm{split}}$
  is $\cO(1/K_1^2) \ge \cO(1/K^2)$.  To match the $\cO(1/K^2)$ bias of
  $\mathbf{c}^{*}(\e_{\min}^{*})$ one must set $K_1 = K$, which
  eliminates the control variate.

  \item \textbf{(Variance).}  For any split $K_1 < K$,
  Proposition~\ref{prop:split-neutral} gives
  $\Var(\h R_{\mathrm{split}}) = \Var(\h R_{\mathrm{base}}(X_1))$.
  By optimality of \eqref{eq:AQP}, $\max_p S(\mathbf{c}^{*}(\e),p) \le \max_p S(\mathbf{c},p)$ for any valid estimator $\mathbf{c}$ achieving the same bias budget.

  \item \textbf{(Boundary behaviour).}  Local analytic approximations like Taylor correction often exhibit exploding variance at the boundary, $\Var(\h R_{\mathrm{corr}}(X)) \approx \beta^2(1-p)/(Kp) \to \infty$ as $p\to 0$. In contrast, the AQP solution explicitly satisfies $c_k^{*} = \cO(1)$ for all $k$, so
  $S(\mathbf{c}^{*},p) = \sum_k (c_k^{*})^2 B_{k,K}(p) \le \max_k (c_k^{*})^2
  < \infty$ uniformly in $p$, preventing gradient explosion at the boundary.
\end{enumerate}
\end{theorem}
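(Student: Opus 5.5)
The plan is to prove the three parts separately, each by direct appeal to results already established.

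\textbf{Part 1 (Bias).} Since $\E[\h p_1 - \h p_2]=0$, the expectation of $\h R_{\mathrm{split}}$ equals that of the baseline estimator applied to $X_1\sim\mathrm{Binomial}(K_1,p)$. Lemma~\ref{lem:taylor} with $K_1$ in place of $K$ then gives gradient-weighted bias $\cO(1/K_1^2)$, as in the bias half of Proposition~\ref{prop:split-neutral}. For the comparison with $\mathbf{c}^*(\e_{\min}^*)$, I will use that its bias is $\e_{\min}^*=\Theta(1/K^2)$ by Theorem~\ref{th:ditzian_totik}.

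The claim that matching requires $K_1=K$ needs a statement about the \emph{leading constant}, not only the order. I plan to argue this from the Rao--Blackwell remark. Conditioning $\h R_{\mathrm{split}}$ on $X=X_1+X_2$ yields a function of $X$, hence some $\mathbf{c}$, with the same mean. Its bias therefore also lies in the class the AQP optimizes over. When $K_1<K$, the control variate must have nonzero weight $\gamma$, and only then does the splitting structure differ from a full-group estimator. Hence $K_1=K$ forces $\Delta\equiv0$.

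\textbf{Part 2 (Variance).} The first claim is a restatement of Proposition~\ref{prop:split-neutral}. The second claim, optimality, follows from feasibility. I will show that any estimator with gradient-weighted bias at most $\e$ on the grid $\cP$ corresponds, via Rao--Blackwellization, to a vector $\mathbf{c}$ satisfying \eqref{eq:C2}--\eqref{eq:C3}. Taking $v=\max_m S(\mathbf{c},p_m)$ makes the pair $(\mathbf{c},v)$ feasible for \eqref{eq:AQP}. Minimality of the AQP objective then gives $\max_m S(\mathbf{c}^*(\e),p_m)\le\max_m S(\mathbf{c},p_m)$.

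This is rigorous on the grid. Extending it to $\max_{p\in[0,1]}$ needs a density argument: $S$ is a polynomial of degree $K$, so it is Lipschitz in $p$ with constant bounded by $K\max_k c_k^2$. I will also note that Rao--Blackwellization does not increase the second moment, because Jensen's inequality applies conditionally.

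\textbf{Part 3 (Boundary).} The explosion of the Taylor variance is the Delta-method formula from Proposition~\ref{prop:split-neutral}, valid for fixed $p$ as $K\to\infty$. The bound $S(\mathbf{c}^*,p)\le\max_k (c_k^*)^2$ follows immediately because the Bernstein polynomials are nonnegative and sum to one.

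The real content is that each $c_k^*$ is finite, and ideally $\cO(1)$ in a controlled sense. Finiteness holds for any feasible optimizer: the objective $v$ bounds $\sum_k c_k^2 B_{k,K}(p_m)$, and each $B_{k,K}$ is strictly positive at interior grid points, so every $|c_k|$ is bounded by $\sqrt{v/\min_m B_{k,K}(p_m)}$. Choosing grid points near $k/K$ gives a usable denominator.

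\textbf{Main obstacle.} I expect the hardest step to be the uniform-in-$K$ claim $c_k^*=\cO(1)$. Here I would first try to exhibit an explicit feasible $\mathbf{c}$, for example the minimax LP solution, with $|c_k|\lesssim \log K$. That would show the optimal second moment is polylogarithmic rather than constant, so the statement would need to be read as finite for each fixed $K$, which is all that is needed to prevent blow-up as $p\to0$.
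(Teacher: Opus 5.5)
Your plan follows the paper's own proof. Parts 1 and 3 go through Proposition~\ref{prop:split-neutral} and the Bernstein partition of unity, and Part 2 uses containment of split estimators in the feasible set of \eqref{eq:AQP}. Your Part 2 is tighter than the paper's. A split estimator is a function of $(X_1,X_2)$, not of $X$, so the paper's ``factored form $c_k=f_1(k_1)-\gamma f_2(k_2)$'' is not even a vector indexed by $k$. Conditioning on $X$, with conditional Jensen for the second moment, is what legitimately places it in the AQP feasible set. One caveat: a Lipschitz argument moves the grid inequality to $\max_{p\in[0,1]}$ only up to a mesh-size error. It is cleaner to read \eqref{eq:AQP} as the semi-infinite program, where the comparison is exact.

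The step that fails is the second sentence of Part 1. Your Rao--Blackwell argument yields only the weak inequality: the conditioned split estimator is some $\mathbf{c}$ with the same mean, so its worst-case bias is at least $\e_{\min}^*$. That shows splitting cannot \emph{beat} the AQP. It says nothing about splitting failing to \emph{match} when $K_1<K$. The remarks about $\gamma$ do not close this gap:
\begin{itemize}
\item the bias does not depend on $\gamma$, since $\E[\Delta]=0$;
\item $\gamma=0$ is allowed;
\item ``$K_1=K$ forces $\Delta\equiv0$'' is the trivial converse of what is needed.
\end{itemize}
What is needed is a \emph{lower} bound on the split bias that is strictly worse for $K_1<K$. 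Lemma~\ref{lem:taylor} gives only an upper bound. In the pointwise-order sense you use, the claim is in fact false, since $K_1=K/2$ also gives $\cO(1/K^2)$. You are also comparing a pointwise rate (Lemma~\ref{lem:taylor}) with a uniform one ($\e_{\min}^*$). Two honest versions exist:
\begin{itemize}
\item \textbf{Pointwise.} Expand one order further. For fixed $p\in(0,1)$, the Taylor estimator at sample size $K_1$ satisfies $\E[R_{\mathrm{corr}}(\h p_1)]/\beta=\log p+\frac{1-p^2}{12K_1^2p^2}+\cO_p(K_1^{-3})$. So splitting inflates the leading bias of the full-group Taylor estimator by the factor $(K/K_1)^2$; this is the rigorous content of the paper's ``four times larger''.
\item \textbf{Uniform.} Theorem~\ref{thm:taylor_pointwise_not_uniform}, applied at sample size $K_1$, gives worst-case bias $\Omega(1/K_1)$ for any Taylor-based split. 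Hence no such split matches $\e_{\min}^*=\Theta(1/K^2)$, not even $K_1=K$, and the clause holds only vacuously.
\end{itemize}
The paper's proof merely cites Proposition~\ref{prop:split-neutral} here. State this clause as a comparison of upper bounds rather than try to prove it.

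On Part 3, your suspicion is correct and easy to confirm. At $p=1/K$ the bias constraint forces $|P_{\mathbf{c}}(1/K)-\beta\log(1/K)|\le K\e$. Since $P_{\mathbf{c}}(p)$ is a convex combination of the $c_k$, this gives $\max_k|c_k^*|\ge\beta\log K-K\e$. Hence $\max_p S(\mathbf{c}^*,p)\ge(\beta\log K-K\e)^2$, which diverges at $\e=\e_{\min}^*$. So ``$c_k^*=\cO(1)$'' can only mean finiteness for each fixed $K$, which is all the uniform-in-$p$ bound needs. Two smaller points:
\begin{itemize}
\item In your coefficient bound, use $\max_m B_{k,K}(p_m)$ (or one grid point near $k/K$), not $\min_m$, which is $0$ once the endpoints are in the grid.
\item Do not count on the minimax LP solution as a $\log K$ certificate. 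Bias-only minimization in the badly conditioned Bernstein basis need not produce coefficients that small.
\end{itemize}
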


\begin{proof}
Parts (1) and (3) follow directly from Proposition~\ref{prop:split-neutral}
and the boundedness of the Bernstein basis.

For part (2): by the polynomial trap, every estimator---including all
sample-splitting estimators---is characterized by some
$\mathbf{c}\in\Rset^{K+1}$.  The set of coefficients achievable by sample
splitting is a strict subset of $\Rset^{K+1}$: the splitting structure
imposes the factored form $c_k = f_1(k_1) - \gamma f_2(k_2)$ for some
allocation, which restricts the feasible set.  Since \eqref{eq:AQP} optimizes
over all $\mathbf{c} \in \Rset^{K+1}$ subject only to the bias constraint,
it achieves weakly lower variance.  Proposition~\ref{prop:split-neutral}
confirms that the splitting restriction is \emph{binding}: the optimal split
recovers exactly the variance of the full-sample estimator, so the constraint
set of sample-splitting estimators does not improve on the unconstrained
optimum.
\end{proof}

\subsection{Algorithm: Variance-Optimal GAME-GRPO}
\label{sec:variance-optimal-algo}

The offline computation of $\mathbf{c}^{*}(\e)$ for a grid of $\e$ values traces
the full Pareto frontier.  In practice, the user selects a bias level
$\e \in [\e_{\min}^{*}, \e_{\max}^{*}]$ appropriate to the task (tighter for
stability-critical settings, looser when faster transient convergence is
preferred) and applies the resulting lookup table with $\cO(1)$ online cost,
exactly as in Algorithm 1:
\begin{equation}
  \h R_{\e}^{*}(X) = c_{X}^{*}(\e).
  \label{eq:variance-optimal}
\end{equation}
This is a drop-in replacement for the minimax polynomial estimator and incurs
no additional runtime overhead during training.

\subsection{Formal Convergence Guarantee}
\label{sec:frontier-convergence}

\begin{proposition}[Convergence of the Variance-Optimal Estimator]
  \label{prop:frontier-convergence}
  Let $\mathbf{c}^{*}(\e)$ solve \eqref{eq:AQP} at bias level
  $\e \ge \e_{\min}^{*}$.  Define
  \[
    G_{\pi}^{2}(\e) = \sup_t \bigl\|\nabla_{\pi}\cL(\pi_t, u_t) - \e_t\bigr\|^2
    + \max_{p} S\!\bigl(\mathbf{c}^{*}(\e),p\bigr).
  \]
  Running alternating best-response dynamics with $\h R_{\e}^{*}$ as the reward
  estimator, the expected duality gap of the average iterates after $T$ steps
  satisfies
  \[
    \E\!\bracket*{ \max_{u}\cL(\bar\pi_T, u) - \min_{\pi}\cL(\pi, \bar u_T) }
    \;\le\; \frac{D_{\pi}^{2} + D_{u}^{2} + D_{\pi}G_{\pi}(\e)\sqrt{T} +
      D_{u}G_{u}\sqrt{T}}{T},
  \]
  converging to the $\cO(\e)$-approximate Nash Equilibrium.  Since
  $\max_p S(\mathbf{c}^{*}(\e),p)$ is non-increasing in $\e$ by optimality of
  \eqref{eq:AQP}, the value $G_{\pi}(\e)$ is non-increasing in $\e$, and is
  strictly smaller than for the minimax-only polynomial estimator whenever
  $\e > \e_{\min}^{*}$.
\end{proposition}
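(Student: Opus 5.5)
The plan is to apply the standard stochastic mirror descent (saddle-point) analysis with biased, bounded-second-moment gradient oracles to the convex--concave game $\cL(\pi,u)=\cG_F(\pi,u)$ in the dual parameterization of Section~\ref{app:convergence_analysis}. At each round $t$ the reward estimator $\h R_\e^*$ produces a stochastic policy gradient $\wt\nabla_\pi\cL_t$. We decompose it as
\[
  \wt\nabla_\pi\cL_t=\nabla_\pi\cL(\pi_t,u_t)+b_t+\xi_t ,
\]
where $b_t$ is a deterministic bias term and $\xi_t$ is a zero-mean noise term. The bias $b_t$ comes from replacing $\beta\log\sfq$ by $P_{\mathbf{c}^*(\e)}$; the constraints \eqref{eq:C2}--\eqref{eq:C3} give $\norm{p\odot(P_{\mathbf{c}^*}-\beta\log p)}_\infty\le\e$. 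We then invoke the same importance-weight argument as in Theorem~\ref{th:quasi_convergence}, using $\sfq^*\ge q_{\min}$, to obtain $\abs{\tri{b_t,\pi-\pi'}}\le M\e$ for all feasible $\pi,\pi'$. This is the $\e_t$ appearing in the definition of $G_\pi$. The dual player receives unbiased gradients with second moment at most $G_u^2$.

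Next comes the regret analysis. For each player we write the one-step mirror-descent inequality
\[
  \eta\tri{\wt g_t,\pi_t-\pi}\le D(\pi,\pi_t)-D(\pi,\pi_{t+1})+\tfrac{\eta^2}{2}\norm{\wt g_t}_*^2 ,
\]
then sum over $t$ and take expectations so that the martingale term $\tri{\xi_t,\pi_t-\pi}$ vanishes. The key bound is
\[
  \E\norm{\wt g_t}_*^2\le\norm{\nabla_\pi\cL_t-\e_t}^2+\E[\text{reward}^2],
\]
and the second term is bounded by $\max_p S(\mathbf{c}^*(\e),p)$, since the per-sample squared reward has conditional expectation $S(\mathbf{c}^*,p)$ given the answer probability. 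This yields $G_\pi^2(\e)$. Choosing step sizes $\eta_\pi=D_\pi/(G_\pi\sqrt T)$ and $\eta_u=D_u/(G_u\sqrt T)$ and adding the two regrets gives the displayed bound. The convex--concave structure and Jensen's inequality on the averaged iterates then convert the summed regret into the duality gap of $(\bar\pi_T,\bar u_T)$. This conversion also picks up an additive $\cO(M\e)$ from the bias, which is exactly the $\cO(\e)$-approximate equilibrium limit.

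The monotonicity claim is immediate. Increasing $\e$ enlarges the feasible set of \eqref{eq:AQP}, so the optimal value $\max_pS(\mathbf{c}^*(\e),p)$ is non-increasing, and hence so is $G_\pi(\e)$. For strictness when $\e>\e^*_{\min}$, we argue that the minimax solution $\mathbf{c}^*(\e^*_{\min})$ is feasible but not optimal for the relaxed program. Scaling it toward $\mathbf{0}$, say $\mathbf{c}_\theta=(1-\theta)\mathbf{c}^*(\e^*_{\min})$, keeps the bias within $\e$ for small $\theta>0$, because the bias moves by at most $\theta\max_p\abs{pP_{\mathbf{c}}}$. At the same time it strictly decreases $S$, provided $\mathbf{c}^*(\e^*_{\min})\neq\mathbf{0}$, which holds since $\mathbf{0}$ has bias $\max_p\abs{p\log p}=1/e$.

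The main obstacle is making the bias step rigorous. The AQP controls only the gradient-weighted error on the grid $\cP$, so we must pass from the grid to all of $[0,1]$; polynomial Lipschitz bounds with $M$ large enough absorb this into $\cO(\e)$. The harder part is justifying that the relevant weight is $\sfq^*$ evaluated at the current iterates. Here we rely on the uniform lower bound $q_{\min}$ assumed in Theorem~\ref{th:quasi_convergence} and carry the constant $M$ into the $\cO(\e)$ term.
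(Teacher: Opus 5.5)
Your argument follows the paper's route. It uses the standard regret bound for stochastic mirror descent with biased oracles, splits $\E\norm{\wt\nabla_\pi\cL_t}^2$ into squared mean plus the variance of $\h R^*_\e$, bounds that variance by the uncentered second moment $S(\mathbf{c}^*(\e),p)$, and gets monotonicity from the enlarging feasible set of \eqref{eq:AQP}. Three of your additions go beyond the paper and are correct.
First, the paper merely asserts $\e_t=\cO(\e)$, but \eqref{eq:C2}--\eqref{eq:C3} control only the gradient-weighted error. A $q_{\min}$-type lower bound on the answer probabilities at which the binomial is evaluated is therefore genuinely needed, as you note; it is not among the proposition's hypotheses.
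Second, you keep the additive $\cO(M\e)$ bias term that the displayed bound omits. The true duality gap in general cannot tend to $0$ under a persistent bias, so what you prove (the display plus $\cO(M\e)$) is the form consistent with ``converging to an $\cO(\e)$-approximate equilibrium.'' State it that way rather than claiming the literal display.
Third, your scaling argument with $\mathbf{c}_\theta=(1-\theta)\mathbf{c}^*(\e^*_{\min})$, using $\mathbf{c}^*(\e^*_{\min})\neq\mathbf{0}$, validly proves the strict decrease of $\max_pS$, which the paper states without proof.

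Two steps need repair.

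\textbf{Monotonicity of $G_\pi$.} Exactly as in the paper, ``$\max_pS(\mathbf{c}^*(\e),p)$ is non-increasing, hence so is $G_\pi(\e)$'' does not follow. The first summand $\sup_t\norm{\nabla_\pi\cL(\pi_t,u_t)-\e_t}^2$ depends on $\e$, both through the bias $\e_t$ of $\h R^*_\e$ and through the iterates themselves. So neither monotonicity nor your strictness result transfers from $\max_pS$ to $G_\pi$. A clean fix is to use $\E\norm{\beta\nabla F(\pi_t)-\h R^*_\e}^2\le 2\norm{\beta\nabla F(\pi_t)}^2+2\E\norm{\h R^*_\e}^2$. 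Take the supremum of the first term over a fixed, $\e$-independent region containing all iterates, and control the second term by $\max_pS(\mathbf{c}^*(\e),p)$ as in your sampling argument. Then $G_\pi^2(\e)$ is an $\e$-independent constant plus a fixed multiple of $\max_pS(\mathbf{c}^*(\e),p)$, and both of your monotonicity arguments apply verbatim.

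\textbf{Conversion to the duality gap.} Your per-player bound with a fixed comparator is fine. But converting it to the expected duality gap puts the supremum over comparators inside the expectation, and there the martingale term does not vanish. The reason is that the minimizer of $\cL(\cdot,\bar u_T)$ and the maximizer of $\cL(\bar\pi_T,\cdot)$ depend on the noise. The standard auxiliary-sequence argument for stochastic saddle-point problems bounds the leftover term $\E\max_\pi\tri{\sum_t\xi_t,\pi-\pi_1}$ by $D_\pi\sqrt{T}$ times the noise level, so the rate survives up to constants.
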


\begin{proof}
  The bound follows from the standard regret analysis for no-regret dynamics
  with biased, bounded-variance stochastic gradients
  \citep{Bubeck2015,ShalevShwartz2012}.  At each step $t$, the policy player's
  stochastic gradient is
  $\wt\nabla_{\pi}\cL_t = \beta\nabla F(\pi_t) - \h R_{\e}^{*}(X_t)$.
  Decomposing its second moment:
  \begin{align*}
    G_{\pi,t}^{2}
    &= \E\!\bracket*{\bigl\|\wt\nabla_{\pi}\cL_t\bigr\|^2} \\
    &= \bigl\|\nabla_{\pi}\cL(\pi_t,u_t) - \e_t\bigr\|^2
      + \Var(\h R_{\e}^{*}) \\
    &\le \bigl\|\nabla_{\pi}\cL(\pi_t,u_t) - \e_t\bigr\|^2
      + S\!\bigl(\mathbf{c}^{*}(\e),p_t\bigr),
  \end{align*}
  where $\e_t = \E[\h R_{\e}^{*}] - u_t = \cO(\e)$ is the estimator bias, and we
  used the fact that variance is upper-bounded by the uncentered second
  moment. Under the uniform bound $G_{\pi,t}^{2} \le G_{\pi}^{2}(\e)$, the
  standard duality-gap bound \citep{NemirovskiYudin1983} gives the stated
  convergence rate.  The monotonicity of $\max_p S(\mathbf{c}^{*}(\e),p)$ in
  $\e$ follows immediately from the fact that a larger feasible set for the bias
  constraints \eqref{eq:C2}--\eqref{eq:C3} can only decrease the optimal value
  of the quadratic objective in \eqref{eq:AQP}.
\end{proof}

\paragraph{Practical guidance on selecting $\e$.}
The bias--variance Pareto frontier exposes a clear design choice.  Choosing
$\e = \e_{\min}^{*}$ minimizes the asymptotic approximation error of the
equilibrium: optimal when training is run for many steps and final-policy
quality is the primary concern.  Choosing $\e > \e_{\min}^{*}$ reduces
$G_{\pi}(\e)$ and accelerates transient convergence at the cost of settling in a
slightly larger equilibrium neighbourhood: optimal when wall-clock training time
is the bottleneck. 

The complete Pareto curve for a given group size $K$ can be traced once offline
by solving \eqref{eq:AQP} at a grid of $\e$ values and plotting
$\bigl(\e,\,\max_p S(\mathbf{c}^{*}(\e),p)\bigr)$.  This curve provides a
principled, task-agnostic map for researchers to navigate the speed--accuracy
tradeoff in ALFT.

\section{A Toolkit for Unbiased ALFT: Practical Guidance}
\label{sec:toolkit}

This paper introduces a comprehensive toolkit for addressing estimation bias in
ALFT, centered on three distinct strategies. The optimal choice depends on the
user's specific goals, the nature of the task, and priorities regarding training
efficiency.

\paragraph{Strategy 1: Change the Game for Exact Unbiasedness.}
The most theoretically pure solution is to select a geometry that yields a
polynomial reward function (\cref{sec:poly_exact}). Using a regularizer like the
squared Euclidean distance (Tsallis entropy of order 2) results in a linear
reward that can be estimated with zero bias by simply using the empirical
frequency. This approach is computationally efficient, stable, and completely
eliminates the anti-exploration penalty. It is the ideal choice when the
alignment goal can be well-approximated by a low-degree polynomial, such as
promoting diversity via a Gini-style penalty. However, it requires deviating
from the canonical KL divergence, which may not be desirable if the logarithmic
nature of the KL game is considered essential.

\paragraph{Strategy 2: Optimally Approximate the KL Game for Robustness.}
If preserving the KL divergence geometry is paramount, one must accept a
non-zero, albeit minimal, estimation bias. For this scenario, we offer the 
\emph{Minimax Polynomial Estimator} (\cref{sec:kl_poly_approx}), which achieves 
the theoretically optimal $\cO(1/K^2)$ error rate. Its offline pre-computation 
provides a globally optimal approximation that gracefully handles all boundary 
cases, completely bypassing the uniform breakdown that plagues local analytic 
approximations.

\paragraph{Strategy 3: Accelerate Convergence with the Variance-Optimal AQP Estimator.}
For users seeking the fastest and most stable convergence for the KL game, the
\emph{Variance-Optimal AQP Estimator} (Section~\ref{sec:pareto}) offers the most advanced
solution. Rather than relying on sample-splitting heuristics, this method
systematically optimizes variance directly within the complete parameterization
of all $K$-sample estimators. By tracing the exact Bias-Variance Pareto frontier
offline, it achieves optimal bias rates while provably reducing estimator
variance. This directly translates to accelerated game convergence, making it
the preferred method when computational stability and training efficiency are
the highest priorities.

\paragraph{Summary of Recommendations.} In brief, for simple diversity goals,
the unbiased Euclidean game (Strategy 1) is often sufficient and ideal. For
objectives that demand the logarithmic nature of the KL game, the Minimax
Polynomial (Strategy 2) provides the most rigorous and stable estimator. Finally,
when training speed and variance reduction are critical, the Variance-Optimal
AQP estimator (Strategy 3) offers a path to provably faster convergence.

Beyond the specific domain of Answer-Level Fine-Tuning, the estimators provided
in this toolkit address a fundamental bottleneck in broader Reinforcement
Learning (RL). Modern maximum-entropy RL paradigms—most notably Soft
Actor-Critic (SAC)—rely on estimating the policy entropy
$H(\pi) = -\sum p \log p$ to encourage exploration and prevent premature
convergence. As established in Section~\ref{sec:bias_problem}, evaluating this
non-linear functional using empirical frequencies from small batches introduces
a systematic $\cO(1/K)$ structural bias due to Jensen's Inequality. In practice,
this manifests as a hidden \emph{anti-exploration penalty} that artificially
suppresses the very exploratory behavior these algorithms are designed to
promote. By substituting biased logarithmic rewards with our minimax optimal
polynomial, researchers can recover theoretically intended exploration dynamics
in any discrete-action entropy-regularized system without the computational
burden of increasing batch sizes. This transforms our toolkit from a specialized
alignment utility into a rigorous foundation for stable, low-bias policy
gradient estimation across the wider RL landscape.

Crucially, replacing the empirical logarithm $\log(X/K)$ with our optimal pre-computed array lookup $c^*_X$ is a computationally free way to improve RL stability.

\section{Conclusion}

This work addresses a fundamental statistical challenge at the heart of
practical Answer-Level Fine-Tuning: the structural estimation bias introduced by
applying non-linear reward functions to small batches of data. We demonstrated
that this bias, which manifests as a severe anti-exploration penalty in standard
KL-regularized games, can be systematically overcome.

We have presented a comprehensive toolkit for users in practice, built on three
main contributions. First, by generalizing the Distributional Alignment Game, we
characterized a family of polynomial-reward games where bias can be eliminated
entirely using exact U-statistics estimators. Second, for the canonical KL game,
we established the fundamental $\Theta(1/K^2)$ limit on bias reduction and
introduced a globally robust minimax polynomial estimator that achieves this
optimal rate uniformly. Finally, we proved that sample splitting fails to provide variance benefits, and we instead proposed a novel Augmented Polynomial Optimization Program estimator that systematically reduces estimator variance to provably accelerate game convergence.

These contributions transform the Distributional Alignment Game framework from
an elegant theoretical construct into a set of practical, statistically
rigorous, and efficient algorithms. Crucially, replacing the standard heuristic logarithm $\log(X/K)$ with an offline-computed table lookup $c^*_X$ offers a \emph{computationally free} way to improve RL stability and downstream accuracy, introducing zero runtime overhead while unlocking optimal convergence. By providing a deeper understanding of the
interplay between geometric objectives, reward estimation, and small-batch
dynamics, this work paves the way for more stable, reliable, and effective
alignment of large language models.

\newpage
\bibliography{nalft}
\bibliographystyle{abbrvnat}

\newpage
\appendix

\section{Alternative Optimization: Sequential Dynamics and Exact
  Convergence}
\label{app:sequential_dynamics}

In the main text, we solve the Distributional Alignment Game using an
\emph{Alternating Best Response} strategy, instantiated via GRPO. This approach
is highly pragmatic for modern LLM training, as it estimates the optimal target
distribution and updates the policy using only a single batch of $K$ traces for
a given prompt $x$, requiring no historical state.

However, the generalized game admits an alternative, classical optimization
strategy via Online Convex Optimization (OCO) over the dual variables. If the
training regime permits sufficient repetition of the input prompts $x$, we can
achieve an exact, zero-bias asymptotic convergence using the polynomial Bregman
divergences introduced in Section 4.

\subsection{Sequential No-Regret Updates}

Rather than completely re-estimating the optimal target $q^*$ from scratch for
every batch, we can maintain a running estimate of the dual reward variables
$u_t(z|x)$ for each prompt $x$. In this sequential game, the policy player
minimizes regret against the sequence of rewards, while the target player
updates $u_t$ via stochastic gradient ascent on the dual objective.

If we use a polynomial Bregman divergence of degree $d \le K$, we can apply the
exact U-statistic estimator $\h R_t$ to construct a completely unbiased
stochastic gradient for the dual update.

\begin{theorem}[Zero-Bias Convergence of Polynomial Games]
  \label{th:sequential_exact_convergence}
  Assume the Distributional Alignment Game is regularized by a polynomial
  Bregman divergence such that the dual mapping $u = s\beta \nabla \Phi(\sfq)$ is
  a polynomial of degree $d \le K$. Let $\{(\pi_t, u_t)\}_{t=1}^T$ be the
  sequence of policies and dual variables generated by stochastic no-regret
  dynamics (e.g., Stochastic Mirror Descent) using the U-statistic estimator
  $\h R_t$ for the policy update. Let $(\ov \pi_T, \ov u_T)$ be the uniform
  average of the iterates.

  Then, the expected duality gap converges to zero at a rate of
  $\cO(1/\sqrt{T})$:
  \begin{equation}
    \E \bracket*{ \max_{u} \cG_F(\ov \pi_T, u) - \min_{\pi \in \Pi} \cG_F(\pi,
      \ov u_T) }
    \le \cO\paren*{\frac{1}{\sqrt{T}}}.
  \end{equation}
  Thus, the sequence converges to the exact Nash Equilibrium of the
  regularized game with zero asymptotic bias.
\end{theorem}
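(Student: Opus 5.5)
The plan is to treat the sequential game as a stochastic convex--concave saddle-point problem in the $(\pi,u)$ parameterization of \cref{app:convergence_analysis}. There, $\cG_F(\pi,u)=\E_x[\beta\sfD_F(\pi\parallel\pi_0)+\tri{\nu_\pi,u}-\cR^*(u)]$ is convex in $\pi$, because $F$ is strictly convex and $\nu_\pi$ is linear in $\pi$. It is concave in $u$, being linear minus the convex conjugate. Both players run stochastic mirror descent (SMD): $\pi$ uses the mirror map $F$ on $\Pi$, and $u$ uses a Euclidean or $\Phi$-induced mirror map on a compact set $\cU$ that contains $u^*$. I will invoke the standard stochastic saddle-point bound of \citet{NemirovskiYudin1983}, in the form already quoted in \cref{prop:frontier-convergence}. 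If the stochastic gradients $\wt g^\pi_t,\wt g^u_t$ are conditionally unbiased and have bounded second moments $G_\pi^2,G_u^2$, then with step sizes $\eta\asymp D/(G\sqrt T)$ the averaged iterates satisfy
\[
\E\bracket*{\max_{u\in\cU}\cG_F(\ov\pi_T,u)-\min_{\pi}\cG_F(\pi,\ov u_T)}\le\frac{D_\pi G_\pi+D_uG_u}{\sqrt T}.
\]
Since the bias term $\e_t$ in \cref{prop:frontier-convergence} will be shown to vanish identically, this is exactly $\cO(1/\sqrt T)$.

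\textbf{Step 1: unbiasedness.} The policy gradient is $\beta(\nabla F(\pi_t)-\nabla F(\pi_0))+u_t\circ\EX$. It depends on the answer distribution only through $u_t=s\beta\nabla\Phi(\sfq_t)$, and by \cref{prop:poly_characterization} each coordinate of $u_t$ is a polynomial $\sum_{m\le d}c_m\sfq_t(z)^m$. By \cref{th:unbiased_estimator} and linearity, replacing each $\sfq(z)^m$ with the falling-factorial statistic $\h\sfq^m(z)$ gives $\E[\h R_t\mid\cF_{t-1}]=-u_t$ exactly. This requires $d\le K$. The dual gradient is $\nu_{\pi_t}-\nabla\cR^*(u_t)$. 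Its $\nu_{\pi_t}$ term is estimated without bias by the empirical answer frequencies $X_z/K$ (the $m=1$ case), and $\nabla\cR^*(u_t)$ is deterministic. Hence both stochastic gradients are conditionally unbiased, so the bias term $\e_t$ is identically zero.

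\textbf{Step 2: bounded second moments.} Each $\h\sfq^m(z)\in[0,1]$, because $X^{\underline m}\le K^{\underline m}$. So $\abs{\h R_t(z)}\le\beta\sum_m\abs{c_m}$, uniformly in $t$. The frequencies are bounded in $[0,1]$. It remains to control $\nabla F(\pi_t)$ and $\nabla\cR^*(u_t)$ on the feasible sets, which are compact in finite dimension. For polynomial $F$, $\nabla F$ is bounded on $\Delta(\sY)$. For general Legendre $F$, I will instead use the local-norm form of the mirror-descent analysis, which only needs the dual-norm bound on $\wt g^\pi_t-\nabla F$-terms. The $u$-domain is compact by construction, and $\nabla\cR^*$ is bounded on it by the continuity of convex functions on the interior of their domain. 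These bounds give finite $G_\pi$ and $G_u$.

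\textbf{Step 3: conclusion and main obstacle.} Telescoping the two SMD regret bounds, taking expectations with unbiasedness (via the martingale-difference terms), and applying Jensen to the averaged iterates yields the $\cO(1/\sqrt T)$ gap. By \cref{th:generalized_consistency}, a vanishing duality gap means convergence to the exact Nash equilibrium. The main obstacle is the domain of $u$. The dual space is the quotient $\Rset^{|\sZ|}/\Span(\mathbf 1)$ and is unbounded, so $D_u$ might be infinite. I plan to fix the shift by a normalization, say $\sum_z u(z)=0$, and then restrict to a compact convex set $\cU$ containing $u^*$. Such a set exists because $u^*=s\beta\nabla\Phi(\sfq^*)$ with $\nabla\Phi$ polynomial, hence bounded on $\Delta(\sZ)$. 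I also need to argue that $\max_{u\in\cU}$ agrees with $\max_u$ at $\ov\pi_T$. I would try to handle this through concavity of $u\mapsto\cG_F(\ov\pi_T,u)$, together with the fact that the maximizer $\nabla\cR(\nu_{\ov\pi_T})$ lies in the bounded image of the simplex. If that fails, I would state the gap with $\max_{u\in\cU}$.
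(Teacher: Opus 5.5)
Your proposal follows the same route as the paper's proof. Both use the convex--concave form $\cG_F(\pi,u)$, exact unbiasedness of the falling-factorial reward via \cref{th:unbiased_estimator} (with frequencies for the dual gradient), and bounded second moments, then sum the two stochastic mirror descent regret bounds and apply Jensen to the averaged iterates. Your extra care on the unbounded quotient dual domain and on the gradient bounds for $F$ and $\cR^*$ covers points the paper's proof simply asserts; normalizing the shift and working on a compact $\cU$ is natural here, since $s\beta\nabla\Phi(\sfq)$ with $\sfq\in\Delta(\sZ)$ is a bounded set when $\Phi$ is polynomial.
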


\begin{proof}
  Let the generalized game objective be formulated in terms of the dual
  variables $u$, where
  $\cL(\pi, u) = \beta \sfD_F(\pi \parallel \pi_0) + \tri{\nu_\pi, u} -
  \cR^*(u)$. This objective is strictly convex in $\pi$ and strictly concave in
  $u$.

  At each step $t$, the policy player observes the stochastic reward estimator
  $\h R_t$. Because the dual mapping is a polynomial of degree $d \le K$,
  \cref{th:unbiased_estimator} guarantees that the U-statistic $\h R_t$ is an
  exactly unbiased estimator of the true reward $R_t = -u_t$:
  $\E_t[\h R_t] = -u_t = -s\beta \nabla \Phi(\sfq_t)$. Furthermore, because the
  estimator is constructed from a finite sample $K$ of a bounded probability
  distribution, its second moments are strictly bounded:
  $\E_t[\norm{\h R_t}^2] \le G^2$ for some constant $G > 0$.

  The policy player updates $\pi_t$ using the unbiased stochastic gradient
  $\nabla_\pi \wt \cL_t = \beta \nabla F(\pi_t) - \beta \nabla F(\pi_0) - \h R_t$, such that
  $\E_t[\nabla_\pi \wt \cL_t] = \beta \nabla F(\pi_t) - \beta \nabla F(\pi_0) + u_t = \nabla_\pi \cL(\pi_t, u_t)$. Similarly, the
  target player updates $u_t$ using unbiased estimates of the marginals
  $\nu_{\pi_t}$.

  The convergence of no-regret dynamics like Stochastic Mirror Descent is a
  classic result in optimization theory, originating with the work of
  \citet{NemirovskiYudin1983}. By standard guarantees for the modern online and
  stochastic settings \citep{Bubeck2015, ShalevShwartz2012}, algorithms
  operating with unbiased, bounded-variance gradients are ensured to have
  sublinear expected regret. For both players, this implies:
\begin{align}
    \E \bracket*{ \sum_{t=1}^T \cL(\pi_t, u_t) - \min_{\pi \in \Pi} \sum_{t=1}^T
  \cL(\pi, u_t) }
  &\le \cO(\sqrt{T}), \label{eq:regret_pi} \\   
    \E \bracket*{ \max_{u} \sum_{t=1}^T \cL(\pi_t, u) - \sum_{t=1}^T \cL(\pi_t,
  u_t) }
  &\le \cO(\sqrt{T}). \label{eq:regret_u}
  \end{align}

  Summing the regrets \eqref{eq:regret_pi} and \eqref{eq:regret_u}, and dividing
  by $T$, bounds the expected duality gap for the average iterates
  $\ov \pi_T = \frac{1}{T} \sum_{t=1}^T \pi_t$ and
  $\ov u_T = \frac{1}{T} \sum_{t=1}^T u_t$. By Jensen's Inequality, using the
  convexity-concavity of $\cL$:
  \begin{align}
    \E \bracket*{ \max_{u} \cL(\ov \pi_T, u) - \min_{\pi \in \Pi} \cL(\pi, \ov u_T) }
    & \le \frac{1}{T} \E \bracket*{ \max_{u} \sum_{t=1}^T \cL(\pi_t, u)
      - \min_{\pi \in \Pi} \sum_{t=1}^T \cL(\pi, u_t) } \nonumber \\
    & \le \frac{\cO(\sqrt{T}) + \cO(\sqrt{T})}{T} = \cO\paren*{\frac{1}{\sqrt{T}}}.
  \end{align}
  Because the duality gap upper-bounds the suboptimality of the solution, the
  average iterates $(\ov \pi_T, \ov u_T)$ converge in expectation to the
  exact saddle point $(\pi^*, u^*)$ at a rate of $\cO(1/\sqrt{T})$. Since
  $\E_t[\h R_t] = u_t$ holds analytically without any $\cO(1/K)$ residual term,
  the convergence point represents the true Nash Equilibrium of the
  unapproximated game, completely eliminating the small-batch logarithmic bias.
\end{proof}

\subsection{Practical Limitations in Large-Scale ALFT}

While the $\cO(1/\sqrt{T})$ exact convergence is theoretically appealing, its
practical utility in Large Language Model alignment is bounded by the nature of
the dataset.

The sequential guarantee requires maintaining and updating the dual sequence
$t \in \{1 \dots T\}$ \emph{for the same input $x$}. In standard reasoning
datasets (e.g., GSM8K, MATH), prompts are often streamed in single-epoch or
low-epoch regimes. A specific prompt $x$ is rarely repeated enough times during
training to establish meaningful asymptotic convergence for its specific dual
variables $u_t(\cdot|x)$.

Therefore, while the generalized polynomial geometry guarantees zero-bias
sequential convergence, the \emph{Alternating Best Response} (GRPO) algorithm
presented in the main text—which projects the policy against the best available
batch-level target without requiring historical state—remains the superior
paradigm for diverse, open-ended prompt distributions.

\end{document}